\newif\ifarxiv
\newif\ifneurips
\title{Approximate Cross-Validation for Structured Models}
\DeclarePairedDelimiter\abs{\lvert}{\rvert}
\Crefname{lm}{Lemma}{Lemmas}
\Crefname{prop}{Proposition}{Propositions}
\Crefname{defn}{Definition}{Definitions}
\Crefname{thm}{Theorem}{Theorems}
\Crefname{assumption}{Assumption}{Assumptions}
\Crefname{cor}{Corollary}{Corollaries}
\Crefname{condition}{Condition}{Conditions}
\author{%
  Soumya Ghosh\thanks{Joint first authorship}\hspace{5pt}\thanks{Also with the Center for Computational Health}\\
  MIT-IBM Watson AI Lab \\
  IBM Research \\
  \texttt{ghoshso@us.ibm.com} \\
  \And
  William T. Stephenson\footnotemark[1] \\
  MIT CSAIL \\ MIT-IBM Watson AI Lab \\
  \texttt{wtstephe@mit.edu} \\
  \And 
  Tin D. Nguyen \\
  MIT CSAIL \\ MIT-IBM Watson AI Lab \\
  \texttt{tdn@mit.edu} \\
  \And 
  Sameer K. Deshpande \\
  MIT CSAIL \\ MIT-IBM Watson AI Lab \\
  \texttt{sameerd@alum.mit.edu} \\
  \And
  Tamara Broderick \\
  MIT CSAIL \\ MIT-IBM Watson AI Lab \\
  \texttt{tbroderick@csail.mit.edu}
}
\begin{document}
\newcommand{\param}{\Theta}
\newcommand{\paramone}{\hat{\Theta}(\mathbf{1}_T)}
\newcommand{\paramonent}{\hat{\Theta}(\mathbf{1}_{NT})}
\newcommand{\paramoneseq}{\hat{\Theta}(\mathbf{1}_T)}
\newcommand{\paramoneseqpartial}[1]{\hat{\Theta}(\mathbf{1}_{#1})}
\newcommand{\paramperturb}[1]{\hat{\Theta}(#1)}
\newcommand{\paramperturbij}[1]{\hat{\Theta}_{IJ}(#1)}
\newcommand{\paramperturbapprox}[1]{{\tilde{\Theta}}(#1)}
\newcommand{\paramoneapprox}{{\tilde{\Theta}}(\mathbf{1})}
\newcommand{\paramloo}{\hat{\Theta}^{\backslash n}}
\newcommand{\paramlo}{\hat{\Theta}^{\backslash \mathbf{o}}}
\newcommand{\paramlooij}{\hat{\Theta}_{\textrm{IJ}}^{\backslash n}}
\newcommand{\paramij}{\hat{\Theta}_{\textrm{IJ}}} 
\newcommand{\paramstep}[1]{\Theta^{(#1)}} 
\newcommand{\paramalg}{\hat\Theta_{\textrm{ACV}}} 
\newcommand{\xnew}{x_{\text{new}}}
\newcommand{\xnewt}[1]{x_{\text{new }{#1}}}
\newcommand{\lapprox}{\tilde{l}}
\newcommand{\eye}{\mathbf{I}}
\newcommand{\normal}{\mathcal{N}}
\newcommand{\ent}[1]{\ensuremath{\mathbb{H}\big[ #1 \big]}}
\newcommand{\E}[1]{\mathbb{E}\big[ #1 \big]}
\newcommand{\Ewrt}[2]{\mathbb{E}_{#1}\big[ #2 \big]}
\newcommand{\xseq}{\mathbf{x}_n}
\newcommand{\xseqone}{\mathbf{x}}
\newcommand{\zseq}{\mathbf{z}_n}
\newcommand{\zseqone}{\mathbf{z}}
\newcommand{\zobs}{\zseqone_{\mathbf{o}}}
\newcommand{\xseqt}[1]{x_{n{#1}}}
\newcommand{\xseqonet}[1]{x_{{#1}}}
\newcommand{\zseqt}[1]{z_{n{#1}}}
\newcommand{\zseqonet}[1]{z_{{#1}}}
\newcommand{\seqwt}{s_n}
\newcommand{\wseq}[1]{\mathbf{w}_{#1}}
\newcommand{\wseqone}{\mathbf{w}}
\newcommand{\wseqo}{\mathbf{w_{o}}}
\newcommand{\wt}{w_t}
\newcommand{\wtt}[1]{w_{#1}}
\newcommand{\xobs}{\mathbf{x}_{[T] - \mathbf{o}}}
\newcommand{\xmiss}{\mathbf{x}_{\mathbf{o}}}
\newcommand{\argmin}[1]{\underset{#1}{\text{argmin }}}
\newcommand{\factors}{\mathcal{F}} 
\newcommand{\leftoutinds}{\mathbf{o}} 
\newcommand{\allfolds}{\mathcal{O}}
\newcommand{\numfolds}{\abs{\mathcal{O}}}
\newcommand{\data}{\mathcal{D}}

\newcommand{\loss}{\mathcal{L}}
\newcommand{\losscv}{\mathcal{L}_{\mathrm{CV}}} 
\newcommand{\losscvexact}{\mathcal{L}_{\mathrm{eCV}}} 
\newcommand{\lossij}{\mathcal{L}_{\mathrm{IJ}}} 
\newcommand{\lossinexact}{\mathcal{L}_{\mathrm{ACV}}} 

\newcommand{\epsparam}{\eps_\param}
\newcommand{\epsij}{\eps_{\mathrm{IJ}}}
\newcommand{\numseq}{N} 
\newcommand{\dimdata}{R} 
\newcommand{\numsteps}{T} 
\newcommand{\numparams}{D} 
\newcommand{\numhmmstates}{K} 

\newcommand{\reals}{\mathbb{R}}
\newcommand{\posreals}{\mathbb{R}_+}
\newcommand{\simplex}{\Delta}

\newcommand{\lwcv}{LWCV\xspace} 
\newcommand{\lscv}{LSCV\xspace} 

\newcommand{\cat}{\textrm{Cat}} 

\newcommand{\err}{\mathrm{Err}}
\newcommand{\todo}[1]{\textcolor{red}{TODO: #1}}
\newcommand{\eps}{\varepsilon}
\newcommand{\R}{\mathbb{R}}
\newcommand{\at}[1]{\Bigr|_{\substack{#1}}}
\newcommand{\n}[1]{\left\lVert #1 \right\rVert}
\newcommand{\dnone}{D_n^{(1)}} 

\newtheorem{thm}{Theorem}
\newtheorem{prop}[]{Proposition}
\newtheorem{lm}[]{Lemma}
\newtheorem{defn}[]{Definition}
\newtheorem{rem}[]{Remark}
\newtheorem{eg}[]{Example}
\newtheorem{cl}{Claim}[thm]
\newtheorem{cor}[]{Corollary}
\newtheorem{assumption}[]{Assumption}
\newtheorem{condition}[]{Condition}

\maketitle
\begin{abstract}
Many modern data analyses benefit from explicitly modeling dependence structure in data -- such as measurements across time or space, ordered words in a sentence, or genes in a genome. A gold standard evaluation technique is structured cross-validation (CV), which leaves out some data subset (such as data within a time interval or data in a geographic region) in each fold. But CV here can be prohibitively slow due to the need to re-run already-expensive learning algorithms many times. 
Previous work has shown approximate cross-validation (ACV) methods provide a fast and provably accurate alternative in the setting of empirical risk minimization.
But this existing ACV work is restricted to simpler models by the assumptions that (i) data across CV folds are independent and (ii) an exact initial model fit is available. In structured data analyses, both these assumptions are often untrue. 
In the present work, we address (i) by extending ACV to CV schemes with dependence structure between the folds. 
To address (ii), we verify -- both theoretically and empirically -- that ACV quality deteriorates smoothly with noise in the initial fit. We demonstrate the accuracy and computational benefits of our proposed methods on a diverse set of real-world applications.

\end{abstract}

\section{Introduction}

Models with complex dependency structures have become standard machine learning tools in analyses of data from science, social science, and engineering fields. These models are used to characterize disease progression \citep{sukkar12, wang14, sun19}, to track crime in a city \citep{BalocchiJensen2019, Balocchi2020}, and to monitor and potentially manage traffic flow \citep{ihler2006adaptive,zheng2017estimating} 
among many other applications.
The potential societal impact of these methods necessitates that they be used and evaluated with care. Indeed, recent work \citep{musgrave2020metric}
has emphasized that hyperparameter tuning and assessment with \emph{cross-validation} (CV) \citep{stone1974earlyCV, geisser1975earlyCV} is crucial to trustworthy and meaningful analysis of modern, complex machine learning methods.

While CV offers a conceptually simple and widely used tool for evaluation, it can be computationally prohibitive in complex models.
These models often already face severe computational demands to fit just once, and CV requires multiple re-fits. 
To address this cost, recent authors \citep{beirami2017firstALOO, rad2018detailedALOO, giordano2018swiss} have proposed \emph{approximate} CV (ACV) methods; their work demonstrates that ACV methods perform well in both theory and practice for a collection of practical models. 
These methods take two principal forms: one approximation based on a Newton step (NS) \citep{beirami2017firstALOO, rad2018detailedALOO} and one based on the classical infinitesimal jackknife (IJ) from statistics \citep{koh2017understanding, beirami2017firstALOO, giordano2018swiss}. Though both ACV forms show promise, there remain major roadblocks to applying either NS or IJ to models with dependency structure. First, all existing ACV theory and algorithms assume that data dropped out by each CV fold are independent of the data in the other folds.
But to evaluate time series models, for instance, we often drop out data points in various segments of time. Or we might drop out data within a geographic region to evaluate a spatiotemporal model. In all of these cases, the independence assumption would not apply.
Second, NS methods require recomputation and inversion of a model's Hessian matrix at each CV fold. In the complex models we consider here, this cost can itself be prohibitive. 
Finally, existing theory for IJ methods requires an exact initial fit of the model -- and authors so far have taken great care to obtain such a fit \citep{giordano2018swiss, stephenson2019sparse}. But practitioners learning in e.g.\ large sequences or graphs typically settle for an approximate fit to limit computational cost.

In this paper, we address these concerns and thereby expand the reach of ACV to include more sophisticated models with dependencies among data points and for which exact model fits are infeasible. 
To avoid the cost of matrix recomputation and inversion across folds, we here focus on the IJ, rather than the NS. In particular, in \cref{sec:structured_IJ}, we develop IJ approximations for dropping out individual nodes in a dependence graph. Our methods allow us e.g.\ to leave out points within, or at the end of, a time series -- but our methods also apply to more general Markov random fields, without a strict chain structure. 
In \cref{sec:inexact_optimization}, we demonstrate that the IJ yields a useful ACV method even without an exact initial model fit. In fact, we show that the quality of the IJ approximation decays with the quality of the initial fit in a smooth and interpretable manner. Finally, we demonstrate our method on a diverse set of real-world applications and models in \cref{sec:exp,app:extra_experiments}. These include count data analysis with time-varying Poisson processes, named entity recognition with neural conditional random fields, motion capture analysis with auto-regressive hidden Markov models, and a spatial analysis of crime data with hidden Markov random fields.

\section{Structured models and cross-validation}
\label{sec:background}

\subsection{Structured models}
Throughout we consider two types of models: (1) hidden Markov random fields (MRFs) with observations $\xseqone$ and latent variables $\zseqone$ and (2) conditional random fields (CRFs) with inputs (i.e., covariates) $\xseqone$ and labels $\zseqone$, both observed. Our developments for hidden MRFs and CRFs are very similar, but with slight differences. We detail MRFs in the main text; throughout, we will refer the reader to the appendix for the CRF treatment. We first give an illustrative example of MRFs and then the general formulation; a CRF overview appears in \cref{sec:supp_crf}.

\textbf{Example}: Hidden Markov Models (HMMs) capture sequences of observations such as words in a sentence or longitudinally measured physiological signals. Consider an HMM with $\numseq$ (independent) sequences, $\numsteps$ time steps, and $\numhmmstates$ states. We take each observation to have dimension $\dimdata$. So the $t$th observed element in the $n$th sequence is $\xseqt{t} \in \reals^{\dimdata}$, and the latent $\zseqt{t} \in [K] := \{1,\ldots, K\}$.
The model is specified by (1) a distribution on the initial latent state $p(\zseqt{1}) = \cat(\zseqt{1} \mid \pi)$, where $\cat$ is the categorical distribution and $\pi \in \simplex_{\numhmmstates-1}$, the $\numhmmstates-1$ simplex; (2) a $\numhmmstates \times \numhmmstates$ transition matrix $A$ with columns $A_{k} \in \simplex_{\numhmmstates-1}$ and $p(\zseqt{t} \mid \zseqt{,t-1}) = \cat(\zseqt{t} \mid A_{\zseqt{,t-1}})$; and (3) emission distributions $F$ with parameters $\theta_k$ such that $p(\xseqt{t} \mid \zseqt{t}) = F(\xseqt{t} \mid \theta_{\zseqt{t}})$.
We collect all parameters of the model in $\param := \{\pi, \{A_k\}_{k=1}^K, \{\theta_k\}_{k=1}^K \}$. We consider $\param$ as a vector of length $\numparams$. We may have a prior $p(\param)$.

More generally, we consider \textbf{(hidden) MRFs} with $\numseq$ structured observations $\xseq$ and latents $\zseq$, independent across $n \in [\numseq]$. We index single observations of dimension $\dimdata$ (respectively, latents) within the structure by $t \in [\numsteps]$: $\xseqt{t} \in \reals^{\dimdata}$ (respectively, $\zseqt{t}$). 
Our experiments will focus on bounded, discrete $\zseqt{t}$ (i.e., $\zseqt{t} \in [K]$), but we use more inclusive notation (that might e.g.\ apply to continuous latents) when possible.
We consider models with parameters $\param \in \reals^{\numparams}$ and a single emission factor for each latent.
\begin{equation}
	\label{eq:mrf}
	-\log p(\xseqone, \zseqone; \param )
		= 
		Z(\param) +
		\sum_{n=1}^{N}
		\left\{ \left[
    			\sum_{t \in [T]} \psi_{t}(\xseqt{t}, \zseqt{t}; \param)
    			\right]
    		+ \left[ 
    			\sum_{\textbf{c} \in \factors} \phi_{\textbf{c}}( \zseqt{\textbf{c}}; \param)
    			\right]
		\right\},
\end{equation}
where $\zseqt{\textbf{c}} := (\zseqt{t})_{t \in \textbf{c}}$ for $\textbf{c} \subseteq [\numsteps]$; $\psi_{t}$ is a log factor mapping $(\xseqt{t}, \zseqt{t})$ to $\reals$; $\phi_{\textbf{c}}$ is a log factor mapping collections of latents, indexed by $\textbf{c}$, to $\reals$; $\factors$ collects the subsets indexing factors; and $Z(\param)$ is a negative log normalizing constant. HMMs, as described above, are a special case; see \cref{sec:supp_examples} for details.
For any MRF, we can learn the parameters by marginalizing the latents and maximizing the posterior, or equivalently the joint, in $\param$.
Maximum likelihood estimation is the special case with formal prior $p(\param)$ constant across $\param$.
\begin{equation}
	\label{eq:fit_structure}
	\hat\param := \argmin{\param}  -\log p(\xseqone; \param) - \log p(\param) = \argmin{\param} -\log \int_{\zseqone} p(\xseqone, \zseqone; \param) \; d\zseqone - \log p(\param).
\end{equation}
\subsection{Challenges of cross-validation and approximate cross-validation in structured models}

In CV procedures, we iteratively leave out some data in order to diagnose variation in $\hat\param$ under natural data variability or to estimate the predictive accuracy of our model. We consider two types of CV of interest in structured models; we make these formulations precise later. (1) We say that we consider \emph{leave-within-structure-out} CV (\lwcv) when we remove some data points $\xseqt{t}$ within a structure and learn on the remaining data points. For instance, we might try to predict crime in certain census tracts based on observations in other tracts. Often in this case $\numseq = 1$~\citep{celeux2008selecting}, ~\citep[Chapter 3.4]{hyndman2018forecasting}, and we assume \lwcv has $\numseq = 1$ for notational simplicity in what follows. (2) We say that we consider \emph{leave-structure-out} CV (\lscv) when we leave out entire $\xseq$ for either a single $n$ or a collection of $n$. For instance, with a state-space model of gene expression, we might predict some individuals' gene expression profiles given other individuals' profiles. In this case, $\numseq \gg 1$ \citep{rangel2004modeling,decaprio2007conrad}. In either (1) or (2), the goal of CV is to consider multiple folds, or subsets of data, left out to assess variability and improve estimation of held-out error. But every fold incurs the cost of the learning procedure in \cref{eq:fit_structure}. Indeed, practitioners have explicitly noted the high cost of using multiple folds and have resorted to using only a few, large folds \citep{celeux2008selecting}, leading to biased or noisy estimates of the out-of-sample variability.

A number of researchers have addressed the prohibitive cost of CV with approximate CV (ACV) procedures for simpler models \citep{beirami2017firstALOO, rad2018detailedALOO, giordano2018swiss}. Existing work focuses on the following learning problem with weights $\wseqone \in \R^J$:
\begin{equation}
	\hat\param(\wseqone) = \argmin{\param} \sum_{j \in [J]} \wtt{j} f_j(\param) + \lambda R(\param),
	\label{eq:old_approx_cv}	
\end{equation}
where $\forall j \in [J], f_j, R: \reals^{\numparams} \rightarrow \reals$ and $\lambda \in \reals_+$. 
When the weight vector $\wseqone$ equals the all-ones vector $\mathbf{1}_{J}$, we recover a regularized empirical loss minimization problem. By considering all weight vectors with one weight equal to zero, we recover the folds of leave-one-out CV; other forms of CV can be similarly recovered. 
The notation $\hat\param(\wseqone)$ emphasizes that the learned parameter values depend on the weights.

To see if this framework applies to \lwcv or \lscv, we can interpret $f_j$ as a negative log likelihood (up to normalization) for the $j$th data point and $\lambda R$ as a negative log prior. Then the likelihood corresponding to the objective of \cref{eq:old_approx_cv} factorizes as $p(\xseqone \mid \param) = \prod_{j \in J} p(\xseqonet{j} \mid \param) \propto \prod_{j \in J} \exp(-f_t(\param))$. This factorization amounts to an independence assumption across the $\{\xseqonet{j}\}_{j \in [J]}$. 
In the case of \lwcv, with $\numseq = 1$, $j$ must serve the role of $t$, and $J = \numsteps$. But the $x_t$ are not independent, so we cannot apply existing ACV methods.
    In the \lscv case, $\numseq \ge 1$, and $j$ in \cref{eq:old_approx_cv} can be seen as serving the role of $n$, with $J = \numseq$. Since the $\xseq$ are independent, \cref{eq:old_approx_cv} can express \lscv folds. 
  
\textbf{Previous ACV work} provides two primary options for the \lscv case. We give a brief review here, but see \cref{app:related_work} for a more detailed review. One option is based on taking a single Newton step on the \lscv objective starting from $\hat\param(\mathbf{1}_{\numsteps})$ \citep{beirami2017firstALOO, rad2018detailedALOO}. Except in special cases -- such as leave-one-out CV for generalized linear models -- this Newton-step approach requires both computing and inverting a new Hessian matrix for each fold, often a prohibitive expense; see \cref{app:NS} for a discussion. 
An alternative method \citep{koh2017understanding, beirami2017firstALOO, giordano2018swiss} based on the \emph{infinitesimal jackknife} (IJ) from statistics \citep{jaeckel1972infinitesimal, efron1981nonparametric} constructs a Taylor expansion of $\hat\param(\wseqone)$ around $\wseqone = \mathbf{1}_\numsteps$. 
For any model of the form in \cref{eq:old_approx_cv}, the IJ requires just a single Hessian matrix computation and inversion.
Therefore, we focus on the IJ for \lscv and use the IJ for inspiration when developing \lwcv below. However, all existing IJ theory and empirics require access to an \emph{exact} minimum for $\hat\param(\mathbf{1}_{J})$. Indeed, previous authors~\citep{giordano2018swiss, stephenson2019sparse} have taken great care to find an exact minimum of \cref{eq:old_approx_cv}. Unfortunately, for most complex, structured models with large datasets, finding an exact minimum requires an impractical amount of computation. Others~\citep{burkner2019approximate} have developed ACV methods for Bayesian time series models and for Bayesian models without dependence structures \citep{vehtari2017practical}. Our development here focuses on empirical risk minimization and is not restricted to temporal models.

In the following, we extend the reach of ACV beyond \lscv and address the issue of inexact optimization. In \cref{sec:structured_IJ}, we adapt the IJ framework to the \lwcv problem for structured models. In \cref{sec:inexact_optimization}, we show theoretically that both our new IJ approximation for \lwcv and the existing IJ approximation applied to \lscv are not overly dependent on having an exact optimum. We support both of these results with practical experiments in \cref{sec:exp}.

\section{Cross-validation and approximate cross-validation in structured models}
\label{sec:structured_IJ}

We first specify a weighting scheme, analogous to \cref{eq:old_approx_cv}, to describe \lwcv in structured models; then we develop an ACV method using this scheme.
Recall that CV in independent models takes various forms such as leave-$k$-out and $k$-fold CV.
Similarly, we consider the possibility of leaving
out\footnote{Note that the weight formulation could be extended to even more general reweightings in the spirit of the bootstrap.
Exploring the bootstrap for structured models is outside the scope of the present paper.}
multiple arbitrary sets of data indices $\leftoutinds \in \allfolds$, where each $\leftoutinds \subseteq [\numsteps]$.
We have two options for how to leave data out in hidden MRFs; see \cref{app:crf_cv} for CRFs. (A) For each data index $t$ left out, we leave out the data point $\xseqonet{t}$ but we retain the latent
$\zseqonet{t}$. For instance, in 
a time series, if data is missing in the middle of the series, we still know the time relation between the surrounding points, and would leave in the latent to maintain this relation.
(B) For each data index $t$ left out, we leave out the data point $\xseqonet{t}$ \emph{and} the latent $\zseqonet{t}$. For instance, consider data in the future of a time series or pixels beyond the edge of a picture. 
We typically would not include the possibility of all possible adjacent latents in such a structure, so leaving out $\zseqonet{t}$ as well is more natural. 
In either case, analogous to \cref{eq:old_approx_cv}, $\hat\param(\wseqone)$ is a function of $\wseqone$ computed by minimizing the negative log joint $-\log p(\xseqone; \Theta, \wseqone) - \log p(\Theta)$, now with $\wseqone$ dependence, in $\Theta$. For case (A), we adapt \cref{eq:mrf} (with $\numseq = 1$) and \cref{eq:fit_structure} with a weight $\wt$ for each $\xseqonet{t}$ term:
\begin{equation}
	\label{eq:mrf_cv_data_A}
	\hat\param(\wseqone)
		= \argmin{\param}
			Z(\param, \wseqone) + \int_{\zseqone}
		\left[
    			\sum_{t \in [T]} \wt \psi_{t}(\xseqonet{t}, \zseqonet{t}; \param)
    			\right]
    		+ \left[ 
    			\sum_{\textbf{c} \in \factors} \phi_{\textbf{c}}( \zseqonet{\textbf{c}}; \param)
    			\right]
		\; d\zseqone
		- \log p(\param)
		.
\end{equation}
Note that the negative log normalizing constant $Z(\param, \wseqone)$ may now depend on $\wseqone$ as well.
For case (B), we adapt \cref{eq:mrf} and \cref{eq:fit_structure} with a weight $\wt$ for each term with $\xseqonet{t}$ or $\zseqonet{t}$:
\begin{equation}
	\label{eq:mrf_cv_data_B}
	\hat\param(\wseqone)
		=  \argmin{\param}
			Z(\param, \wseqone) + \int_{\zseqone}
		\left[
    			\sum_{t \in [T]} \wt \psi_{t}(\xseqonet{t}, \zseqonet{t}; \param)
    			\right]
    		+ \left[ 
    			\sum_{\textbf{c} \in \factors} \left(\prod_{t \in \textbf{c}} \wt \right) \phi_{\textbf{c}}( \zseqonet{\textbf{c}}; \param)
    			\right]
		\; d\zseqone
		- \log p(\param)
		.
\end{equation}
In both cases, the choice $\wseqone = \mathbf{1}_{\numsteps}$ recovers the original learning problem. Likewise, setting $\wseqone = \wseqo$, where $\wseqo$ is a vector of ones with $w_t = 0$ if $t \in \leftoutinds$, drops out the data points in $\leftoutinds$ (and latents in case (B)). We show in \cref{app:chain_equivalence} that these two schemes are equivalent in the case of chain-structured graphs when $\leftoutinds = \{ T', T'+1, \dots, T\}$ but also that they are not equivalent in general. We thus consider both schemes going forward.

The expressions above allow a unifying viewpoint on \lwcv but still require re-solving $\hat\param(\wseqo)$ for each new CV fold $\leftoutinds$. To avoid this expense, we propose to use an IJ approach. In particular, as discussed by \citet{giordano2018swiss}, the intuition of the IJ is to notice that, subject to regularity conditions, a small change in $\wseqone$ induces a small change in $\hat\param(\wseqone)$. So we propose to approximate $\hat\param(\wseqo)$ with $\paramij(\wseqo)$, a first-order Taylor series expansion of $\hat\param(\wseqone)$ as a function of $\wseqone$ around $\wseqone = \mathbf{1}_{\numsteps}$. We follow \citet{giordano2018swiss} to derive this expansion in \cref{app:IJ_derivations}. Our IJ based approximation is applicable when the following conditions hold,
\begin{assumption}
 The model is fit via optimization (e.g.\ MAP or MLE).
\end{assumption}
\begin{assumption} 
	The model objective is twice differentiable and the Hessian matrix is invertible at the initial model fit $\hat\Theta$.
\end{assumption}
\begin{assumption} 
	The model fits across CV folds, $\paramlo$, can be written as optima of the same weighted objective for all folds $\mathbf{o}$ (e.g.\ as in \cref{eq:mrf_cv_data_A,eq:mrf_cv_data_B}).
\end{assumption}
We summarize our method and define $\paramalg$, with three arguments, in \cref{alg:main_algorithm}; we define $\paramij(\wseqo) := \paramalg(\paramone, \xseqone, \leftoutinds)$. 
\begin{algorithm}
\caption{Approximate leave-within-structure-out cross-validation for all folds $\leftoutinds \in \mathcal{O}$}
\label{alg:main_algorithm}
\begin{algorithmic}[1] 
\REQUIRE $\param_1, \xseqone, \mathcal{O}$\\
\STATE Define \emph{weighted} marginalization over $\zseqone$: $\log p(\xseqone; \param, \wseqone) =  \textsc{WeightedMarg}(\xseqone, \param, \wseqone)$. 
\STATE Compute $H = \frac{\partial^2 \log p(\xseqone; \param, \wseqone) + \log p(\param)}{\partial\param \partial \param^\top} \bigg|_{\param = \param_1, \wseqone = \mathbf{1}_\numsteps}$ 
\STATE Compute matrix $J = (J_{dt}) := \left( \frac{\partial^2 \log p(\xseqone; \param, \wseqone) + \log p(\param)}{\partial\param_d \partial \wt} \bigg|_{\param = \param_1, \wseqone = \mathbf{1}_\numsteps} \right)$
\STATE \textbf{for} $\leftoutinds \in \mathcal{O}$, \textbf{do:} \, $\paramalg(\param_1, \xseqone, \leftoutinds) := \displaystyle \param_1 + \sum_{t \in \leftoutinds}H^{-1}J_t$ \hfill \# $J_t$ is the $t$th column of $J$ \label{line:acv_for_loop}

\RETURN $\{ \paramalg(\param_{\mathbf{1}}, \xseqone, \leftoutinds) \}_{\leftoutinds \in \mathcal{O}}$
\end{algorithmic}
\end{algorithm}

First, note that the proposed procedure applies to either weighting style (A) or (B) above; they each determine a different $\log p(\xseqone, \param; \wseqone)$ in \cref{alg:main_algorithm}. We provide analogous \lscv algorithms for MRFs and CRFs in \cref{alg:supp_algorithm,alg:supp_algorithm2} (\cref{app:LSCV,app:crf_cv}).
Next, we compare the cost of our proposed ACV methods to exact CV. In what follows, we consider the initial learning problem $\paramoneseq$ a fixed cost and focus on runtime after that computation. We consider running CV for all folds $\leftoutinds \in \mathcal{O}$ in the typical case where the number of data points left out of each fold, $|\leftoutinds|$, is constant.
\begin{prop}
Let $M$ be the cost of a marginalization, i.e., running \textsc{WeightedMarg}; let $\numseq \ge 1$ be the number of independent structures; and let $S$ be the maximum number of steps used to fit the parameter in our optimization procedure. 
The cost of any one of our ACV algorithms (\cref{alg:main_algorithm,alg:supp_algorithm,alg:supp_algorithm2}) is in $O(M \numseq + D^3 + \numparams^2 \, |\leftoutinds| \, \numfolds)$. 
Exact CV is in $O(M \numseq S \numfolds)$.
\end{prop}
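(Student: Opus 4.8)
The plan is to walk line by line through \cref{alg:main_algorithm} (and, since \cref{alg:supp_algorithm,alg:supp_algorithm2} have the same shape, this simultaneously covers the \lscv variants), charge each line a cost, and add them up; then to bound exact CV separately as (number of folds) $\times$ (number of optimization steps per fold) $\times$ (per-step cost).

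\textbf{ACV side.} First I would argue that Lines~2--3, which build the Hessian block $H$ and the cross-derivative block $J$, cost $O(M\numseq)$: evaluating the weighted marginal log joint is a single \textsc{WeightedMarg} call per independent structure because the marginalization factorizes across the $\numseq$ structures, and the first and second derivatives needed for $H$ and $J$ are obtained by a constant number of reverse-mode passes over that computation, with only lower-order bookkeeping to assemble the arrays. Next, the one-time factorization (``inversion'') of the $\numparams \times \numparams$ matrix $H$ costs $O(\numparams^3)$, and the key observation is that this factorization does not depend on $\leftoutinds$, so it is reused for every fold. Then, in the loop of Line~\ref{line:acv_for_loop}, each fold $\leftoutinds$ needs $|\leftoutinds|$ solves of the form $H^{-1}J_t$ against the stored factorization, each an $O(\numparams^2)$ operation, plus $O(\numparams\,|\leftoutinds|)$ to accumulate the sum; over all $\numfolds$ folds this is $O(\numparams^2\,|\leftoutinds|\,\numfolds)$. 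Summing the three contributions and absorbing the lower-order assembly terms yields $O(M\numseq + \numparams^3 + \numparams^2\,|\leftoutinds|\,\numfolds)$.

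\textbf{Exact-CV side.} For each of the $\numfolds$ folds, exact CV re-solves the reweighted version of \cref{eq:fit_structure} from scratch; I would charge each of the (at most) $S$ optimization steps the cost of one objective-and-gradient evaluation, which is dominated by one marginalization per structure, i.e.\ $O(M\numseq)$. Hence a single fold costs $O(M\numseq S)$ and all folds cost $O(M\numseq S\,\numfolds)$, giving the second claim.

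\textbf{Main obstacle.} The delicate step is the claim that $H$ and $J$ can be formed in $O(M\numseq)$ time rather than a factor $\Theta(\numparams)$ (or worse) more: this rests on the cheap-gradient principle of reverse-mode automatic differentiation together with the internal structure of \textsc{WeightedMarg}, and pinning down the precise constant is essentially the only real work. The other point I would emphasize is bookkeeping rather than analysis: the $O(\numparams^3)$ term appears \emph{additively}, not multiplied by $\numfolds$, precisely because $H$ is fold-independent and factorized once --- which is exactly where the IJ gains over a per-fold Newton step, which would instead pay $\Theta(\numparams^3)$ for every fold.
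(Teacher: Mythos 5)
Your proposal is correct and follows essentially the same accounting as the paper's proof: $O(M\numseq)$ for $H$ and $J$ via automatic differentiation (the paper cites \citet{biggs2000AD} for the claim that each costs a constant multiple of \textsc{WeightedMarg}), a single fold-independent $O(\numparams^3)$ factorization, $O(\numparams^2\,|\leftoutinds|\,\numfolds)$ for the loop, and $O(M\numseq S\numfolds)$ for exact CV by charging one marginalization per structure per optimization step per fold. Your added detail that each $H^{-1}J_t$ is an $O(\numparams^2)$ back-solve against the stored factorization, and your flagging of the AD cost claim as the one step requiring an external result, only make explicit what the paper leaves implicit.
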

\begin{proof} For each of the $\numfolds$ folds of CV and each of the $\numseq$ structures, we compute the marginalization (cost $M$) at each of the $S$ steps of the optimization procedure. 
In our ACV algorithms, we compute $H$ and $J$ with automatic differentiation tools \citep{baydin2015automatic}. The results of \citet{biggs2000AD} demonstrate that $H$ and $J$ each require the same computation (up to a constant) as \textsc{WeightedMarg}. So, across $\numseq$, we incur cost $M \numseq$. 
We then incur a $O(D^3)$ cost to invert\footnote{In practice, for numerical stability, we compute a Cholesky factorization of $H$.} $H$.
The remaining cost is from the for loop.
\end{proof}

In structured problems, we generally expect $M$ to be large; see \cref{app:weighted_marg} for a discussion of the costs, including in the special case of chain-structured MRFs and CRFs. And for reliable CV, we want $\numfolds$ to be large. So we see that our ACV algorithms reap a savings by, roughly, breaking up the product of these terms into a sum and avoiding the further $S$ multiplier. 

\section{IJ behavior under inexact optimization}
\label{sec:inexact_optimization}

By envisioning the IJ as a Taylor series approximation around $\paramone$, the approximations for \lwcv (\cref{alg:main_algorithm}) and \lscv (\cref{alg:supp_algorithm,alg:supp_algorithm2} in the appendix) assume we have access to the exact optimum $\paramone$. In practice, though, especially in complex problems, computational considerations often require using an inexact optimum. More precisely, any optimization algorithm returns a sequence of parameter values $(\paramstep{s})_{s=1}^{S}$. Ideally the values $\paramstep{S}$ will approach the optimum $\hat\param(\mathbf{1}_{\numsteps})$ as $S \rightarrow \infty$. But we often choose $S$ such that $\paramstep{S}$ is much farther from $\hat\param(\mathbf{1}_{\numsteps})$ than machine precision. In practice, then, we input $\paramstep{S}$ (rather than $\hat\param(\mathbf{1}_{\numsteps})$) to \cref{alg:main_algorithm}. We now check that the error induced by this substitution is acceptably low.

We focus here on a particular use of CV: estimating out-of-sample loss. For simplicity, we discuss the $\numseq =1$ case here; see \cref{app:inexact_optimization} for the very similar $\numseq \geq 1$ case. 
For each fold $\leftoutinds \in \mathcal{O}$, we compute $\hat\param(\wseqo)$ from the points kept in and then calculate the loss (in our experiments here, negative log likelihood) on the left-out points. I.e.\ the CV estimate of the out-of-sample loss is
$
	\losscv := (1/\numfolds) \sum_{\mathbf{o} \in \mathcal{O}}
			-\log p(\xseqonet{\mathbf{o}} \mid \xseqonet{[\numsteps]\mathbf{-o}}; \hat\param(\wseqo)),
$
where $-\log p$ may come from either weighting scheme (A) or (B). 
See \cref{app:inexact_optimization} for an extension to CV computed with a generic loss $\ell$.
We approximate $\losscv$ using some $\param$ as input to \cref{alg:main_algorithm}; we denote this approximation by
$
	\lossij(\param) := (1/\abs{\mathcal{O}}) \sum_{\leftoutinds \in \mathcal{O}} -\log p(x_{\leftoutinds} \mid x_{[\numsteps] - \leftoutinds} ; \paramalg(\param, \xseqone, \leftoutinds )).
$

Below, we will bound the error in our approximation: $|\losscv - \lossij(\paramstep{S})|$. There are two sources of error. (1) The difference in loss between exact CV and the exact IJ approximation, $\epsij$ in \cref{eq:inexact_optimization_constants}. (2) The difference in the parameter value, $\epsparam$ in \cref{eq:inexact_optimization_constants}, which will control the difference between $\lossij(\paramone)$ and $\lossij(\paramstep{S})$.
\begin{equation} \label{eq:inexact_optimization_constants}
	\epsij := |\losscv - \lossij(\paramone)|,
	\quad
	\epsparam := \| \paramstep{S} - \hat\param(\mathbf{1}_{\numsteps}) \|_2
\end{equation}
Our bound below will depend on these constants. We observe that empirics, as well as theory based on the Taylor series expansion underlying the IJ, have established that $\epsij$ is small in various models; we expect the same to hold here. Also, $\epsparam$ should be small for large enough $S$ according to the guarantees of standard optimization algorithms. We now state some additional regularity assumptions before our main result.

\begin{assumption}
\label{assum:inexact_optimization}
  Take any ball $B \subset \R^D$ centered on $\paramone$ and containing $\paramstep{S}$. We assume the objective $-\log p(\xseqone ; \param, \mathbf{1}_\numsteps) - p(\param)$ is strongly convex with parameter $\lambda_{\mathrm{min}}$ on $B$. Additionally, on $B$, we assume the derivatives $g_t(\param) := \partial^2 \log p(\xseqone ; \param, \wseqone) / \partial \param \partial w_t$ are Lipschitz continuous with constant $L_g$ for all $t$, and the inverse Hessian of the objective is Lipschitz with parameter $L_{Hinv}$. Finally, on $B$, take $\log p(\xseqone; \param, \wseqo)$ to be a Lipschitz function of $\param$ with parameter $L_p$ for all $\wseqo$.
\end{assumption}
We make a few remarks on the restrictiveness of these assumptions. First, while few structured models have objectives that are even convex (e.g., the label switching problem for HMMs guarantees non-convexity), we expect most objectives to be locally convex around an exact minimum $\paramone$; \cref{assum:inexact_optimization} requires that the objective in fact be strongly locally convex. Next, while the Lipschitz assumption on the $g_t$ may be hard to interpret in general, we note that it takes on a particularly simple form in the setup of \cref{eq:old_approx_cv}, where we have $g_t = \nabla f_t$. Finally, we note that the condition that the inverse Hessian is Lipschitz is not much of an additional restriction. E.g., if $\nabla p(\param)$ is also Lipschitz continuous, then the entire objective has a Lipschitz gradient, and so its Hessian is bounded. As it is also bounded below by strong convexity, we find that the inverse Hessian is bounded above and below, and thus is Lipschitz continuous. We now state our main result.
\begin{prop}
\label{prop:inexact_optimization}
   The approximation error of $\lossij(\paramstep{S})$ satisfies the following bound:
  \begin{equation} 
  	\abs{ \lossij(\paramstep{S}) - \losscv } \leq C \eps_{\theta} + \eps_{\mathrm{IJ}}, 
  \end{equation}
   $$
   	\textrm{ where } \quad C := 
	L_p + \frac{L_p L_g}{\lambda_{\mathrm{min}}} + \frac{L_p L_{Hinv}}{\numfolds} \sum_{\mathbf{o} \in \mathcal{O}}   \n{ \sum_{t \in \mathbf{o}} \nabla g_t(\paramone) }_2. 	
$$
\end{prop}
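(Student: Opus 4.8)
The plan is to decompose the target quantity by inserting $\lossij(\paramone)$:
\[
\abs{\lossij(\paramstep{S}) - \losscv}
\;\leq\;
\abs{\lossij(\paramstep{S}) - \lossij(\paramone)}
\;+\;
\abs{\lossij(\paramone) - \losscv}.
\]
The second term is exactly $\epsij$ by definition, so everything reduces to bounding the first term by $C\epsparam$. Unpacking the definition of $\lossij$, this first term is at most $(1/\numfolds)\sum_{\leftoutinds \in \mathcal{O}} \abs{\log p(x_{\leftoutinds} \mid x_{[\numsteps]-\leftoutinds}; \paramalg(\paramstep{S},\xseqone,\leftoutinds)) - \log p(x_{\leftoutinds} \mid x_{[\numsteps]-\leftoutinds}; \paramalg(\paramone,\xseqone,\leftoutinds))}$. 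By the $L_p$-Lipschitz assumption on $\log p(\xseqone;\param,\wseqo)$ in \cref{assum:inexact_optimization}, each summand is at most $L_p \n{\paramalg(\paramstep{S},\xseqone,\leftoutinds) - \paramalg(\paramone,\xseqone,\leftoutinds)}_2$, so it suffices to bound the per-fold difference of the ACV outputs.

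The core of the argument is therefore to control $\n{\paramalg(\paramstep{S},\xseqone,\leftoutinds) - \paramalg(\paramone,\xseqone,\leftoutinds)}_2$ for a fixed fold $\leftoutinds$. Writing $H(\param)$ for the objective Hessian and $g_t(\param) = \partial^2 \log p / \partial\param\,\partial w_t$, \cref{alg:main_algorithm} gives $\paramalg(\param,\xseqone,\leftoutinds) = \param + H(\param)^{-1}\sum_{t\in\leftoutinds} g_t(\param)$ (up to the sign conventions in the algorithm). Subtracting the two instances and using the triangle inequality, I would split the difference into three pieces: (a) the $\param - \param$ term, contributing $\n{\paramstep{S} - \paramone}_2 = \epsparam$ directly — this produces the leading ``$L_p$'' in $C$; (b) a term $H(\paramstep{S})^{-1}\big(\sum_t g_t(\paramstep{S}) - \sum_t g_t(\paramone)\big)$, which I bound using $\n{H(\paramstep{S})^{-1}}_2 \leq 1/\lambda_{\mathrm{min}}$ from strong convexity and the $L_g$-Lipschitz property of each $g_t$ (summed over $t\in\leftoutinds$), giving a contribution proportional to $(L_g/\lambda_{\mathrm{min}})\,|\leftoutinds|\,\epsparam$ — this is the ``$L_p L_g/\lambda_{\mathrm{min}}$'' term, though I should double-check whether the paper intends the $|\leftoutinds|$ factor to be absorbed into $L_g$ or is suppressed because $\nabla g_t$ appears in the third term instead; and (c) a term $\big(H(\paramstep{S})^{-1} - H(\paramone)^{-1}\big)\sum_t g_t(\paramone)$, bounded via $L_{Hinv}$-Lipschitzness of the inverse Hessian times $\epsparam$ times $\n{\sum_{t\in\leftoutinds} \nabla g_t(\paramone)}_2$ — wait, more precisely $\n{\sum_{t\in\leftoutinds} g_t(\paramone)}_2$; the appearance of $\nabla g_t$ rather than $g_t$ in the statement of $C$ suggests the intended bound actually controls $\n{\sum_t g_t(\paramone)}_2$ by a local Taylor/Lipschitz argument around a reference point or uses that the terms in question are themselves gradients, so I would reconcile this by noting $g_t = \nabla f_t$ in the ERM setup of \cref{eq:old_approx_cv} so $\nabla g_t$ is the appropriate Hessian-of-$f_t$ quantity, or by absorbing a harmless constant. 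Averaging piece (c) over folds produces exactly the $(L_p L_{Hinv}/\numfolds)\sum_{\leftoutinds}\n{\sum_{t\in\leftoutinds}\nabla g_t(\paramone)}_2$ term.

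Assembling (a), (b), (c), multiplying through by $L_p$, and averaging over the $\numfolds$ folds yields $\abs{\lossij(\paramstep{S}) - \lossij(\paramone)} \leq C\epsparam$ with the stated $C$, and adding $\epsij$ completes the proof. The main obstacle I anticipate is bookkeeping in step (c): making the inverse-Hessian perturbation bound rigorous requires that both $\paramstep{S}$ and $\paramone$ (and ideally the segment between them) lie in the ball $B$ on which the Lipschitz and strong-convexity hypotheses hold — this is guaranteed by the choice of $B$ in \cref{assum:inexact_optimization} — and it requires carefully matching the ``$\nabla g_t$'' in the claimed constant $C$ to whatever quantity the triangle-inequality decomposition naturally produces; I would want to verify the exact form of the IJ update used in the appendix derivation (\cref{app:IJ_derivations}) to nail the sign and whether a further Lipschitz step converts $\n{\sum_t g_t(\paramone)}_2$ into the gradient expression. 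Everything else is a routine chain of triangle inequalities and operator-norm bounds.
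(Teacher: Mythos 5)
Your proposal matches the paper's proof essentially step for step: the same insertion of $\lossij(\paramone)$ via the triangle inequality, the same Lipschitz reduction to the per-fold difference of the ACV outputs, and the same three-way split of that difference handled by strong convexity ($\n{H^{-1}}_2 \le 1/\lambda_{\mathrm{min}}$), Lipschitzness of the $g_t$, and Lipschitzness of the inverse Hessian. The two discrepancies you flag are genuine quirks of the paper rather than gaps in your argument: the appendix proof indeed produces $\n{\sum_{t\in\leftoutinds} g_t(\paramone)}_2$ (so the $\nabla$ in the main-text constant appears to be a typo), and the $\abs{\leftoutinds}$ factor arising from summing the $L_g$-Lipschitz bounds over $t\in\leftoutinds$ is silently absorbed in the paper's displayed chain.
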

See \cref{app:inexact_optimization} for a proof. Note that, while $C$ may depend on $T$ or $\mathcal{O}$, we expect it to approach a constant as $T \to\infty$ under mild distributional assumptions on $\| g_t \|_2$; see \cref{app:inexact_optimization}.
We finally note that although the results of this section are motivated by structured models, they apply to, and are novel for, the simpler models considered in previous work on ACV methods.
\section{Experiments}
\label{sec:exp}

We demonstrate the effectiveness of our proposed ACV methods on a diverse set of real-world examples where data exhibit temporal and spatial dependence: namely, temporal count modeling, named entity recognition, and spatial modeling of crime data. Additional experiments validating the accuracy and computational benefits afforded by \lscv are available in \cref{subsec:mocap}, where we explore auto-regressive HMMs for motion capture analysis -- with $\numseq = 124$, $\numsteps$ up to 100, and $\numparams$ up to 11{,}712. 


\textbf{Approximate leave-within-sequence-out CV: Time-varying Poisson processes.}
We begin by examining approximate \lwcv (Algorithm~\ref{alg:main_algorithm}) for maximum a posteriori (MAP) estimation. We consider a time-varying Poisson process model used by \citep{ihler2006adaptive} for detecting events in temporal count data. We analyze loop sensor data collected every five minutes over a span of 25 weeks from a section of a freeway near a baseball stadium in Los Angeles. For this problem, there is one observed sequence ($\numseq = 1$) with $\numsteps = 50{,}400$ total observations. There are $\numparams = 11$ parameters. Full model details are in \cref{subsec:dodger}.

To choose the folds in both exact CV and our ACV method, we consider two schemes, both following style (A) in \cref{eq:mrf_cv_data_A}; i.e., we omit observations (but not latents) in the folds. First, we follow the recommendation of \citet{celeux2008selecting}; namely, we form each fold by selecting $m\%$ of measurements to omit (i.e., to form $\leftoutinds$) uniformly at random and independently across folds. We call this scheme \emph{i.i.d.\ \lwcv}. Second, we consider a variant where we omit $m\%$ of observations in a contiguous block. We call this scheme \emph{contiguous \lwcv}; see \cref{subsec:dodger}.

In evaluating the accuracy of our approximation, we focus on a subset of $\numsteps_{sub} = 10{,}000$ observations, plotted in the top panel of \cref{fig:dodgers_withinseq}. The six panels in the lower left of \cref{fig:dodgers_withinseq} compare our ACV estimates to exact CV. Columns range over left-out percentages $m=2,5,10$ (all on the data subset); rows depict i.i.d. \lwcv (upper) and contiguous CV (lower). For each of $\numfolds = 10$ folds and for each point $\xseqonet{t}$ left out in each fold, we plot a red dot with the exact fold loss $-\log p(\xseqonet{t} \mid \xobs; \hat\param(\wseq{\leftoutinds}))$ as its horizontal coordinate and our approximation $-\log p(\xseqonet{t} \mid \xobs; \paramij(\wseq{\leftoutinds}))$ as its vertical coordinate. We can see that every point lies close to the dashed black $x=y$ line; that is, the quality of our approximation is uniformly high across the thousands of points in each plot.

In the two lower right panels of \cref{fig:dodgers_withinseq}, we compare the speed of exact CV to our approximation and the Newton step (NS) approximation~\citep{beirami2017firstALOO, rad2018detailedALOO} on two data subsets (size 5,000 and 10,000) and the full data. No reported times include the initial $\paramone$ computation since $\paramone$ represents the unavoidable cost of the data analysis itself. I.i.d.\ \lwcv appears in the upper plot, and contiguous \lwcv appears in the lower. For our approximation, we use 1,000 folds. Due to the prohibitive cost of both exact CV and NS, we run them for 10 folds and multiply by 100 to estimate runtime over 1,000 folds. We see that our approximation confers orders of magnitude in time savings both over exact CV and approximations based on NS. 
In \cref{app:IJ_v_NS}, we show that the approximations based on NS do not substantatively improve upon those provided by the significantly cheaper IJ approximations.

\begin{figure}[ht]
\centering
\includegraphics[width=1\textwidth]{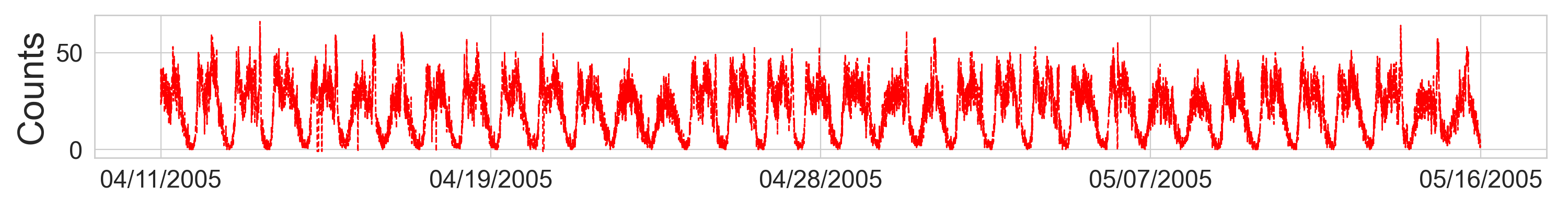}\\
\includegraphics[width=1\textwidth]{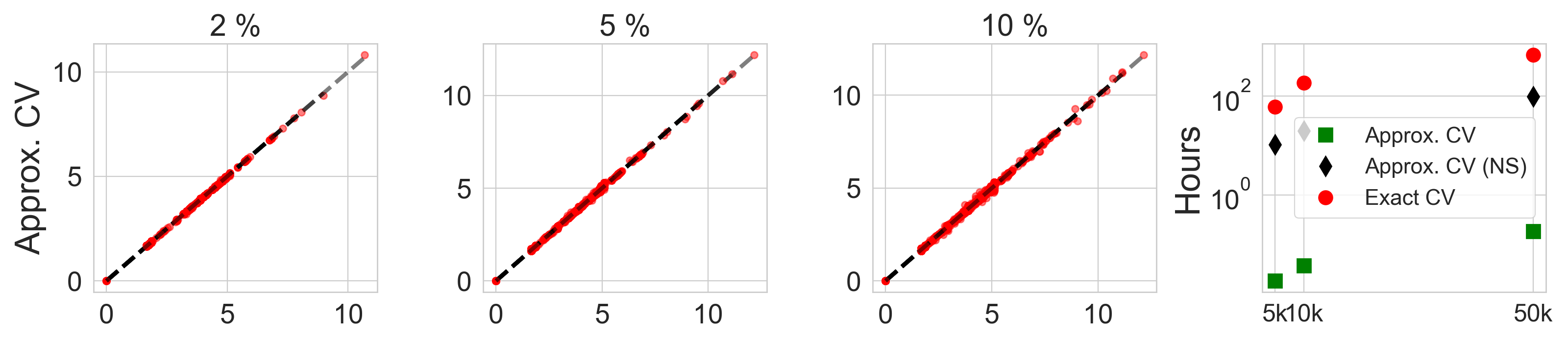}
\includegraphics[width=1\textwidth]{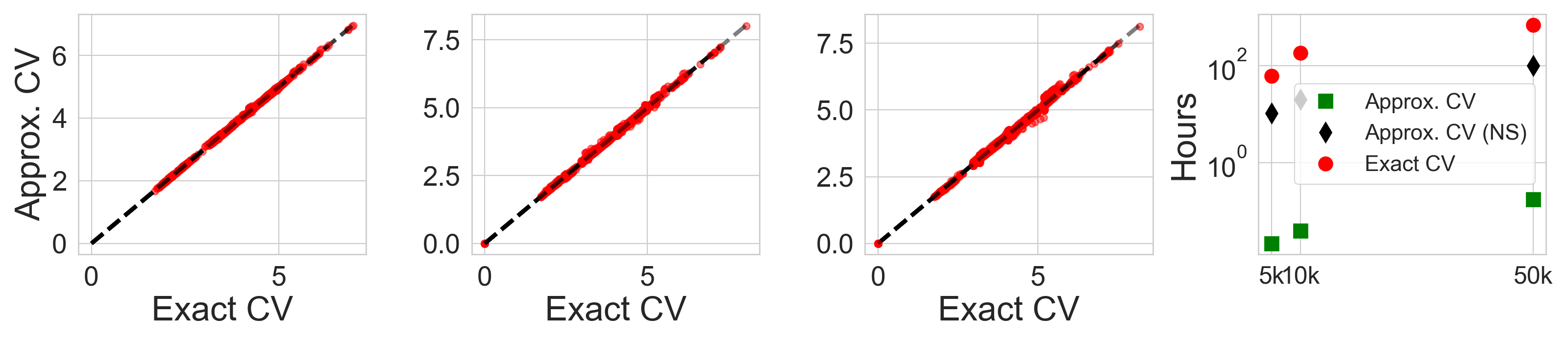}
	\vskip -10pt
\caption{Evaluation of approximate \lwcv for time-varying Poisson processes. \emph{(Top panel)} A subset of the count series. \emph{(Lower left six panels)} Scatter plots comparing exact CV loss (horizontal axis) at each point in each fold (red dots) to our approximation of CV loss (vertical axis). Black dashed line shows perfect agreement. Three columns for percent points left out; two rows for i.i.d.\ \lwcv (upper) and contiguous \lwcv (lower). \emph{(Lower right two panels)} Wall-clock time for exact and approximate CV measured on a 2.5GHz quad core Intel i7 processor with 16GB of RAM; same rows as left panels.}
\label{fig:dodgers_withinseq}
\end{figure}

\textbf{Robustness to inexact optimization: Neural conditional random fields.}
\begin{figure}[ht]
\centering
\includegraphics[width=1\textwidth]{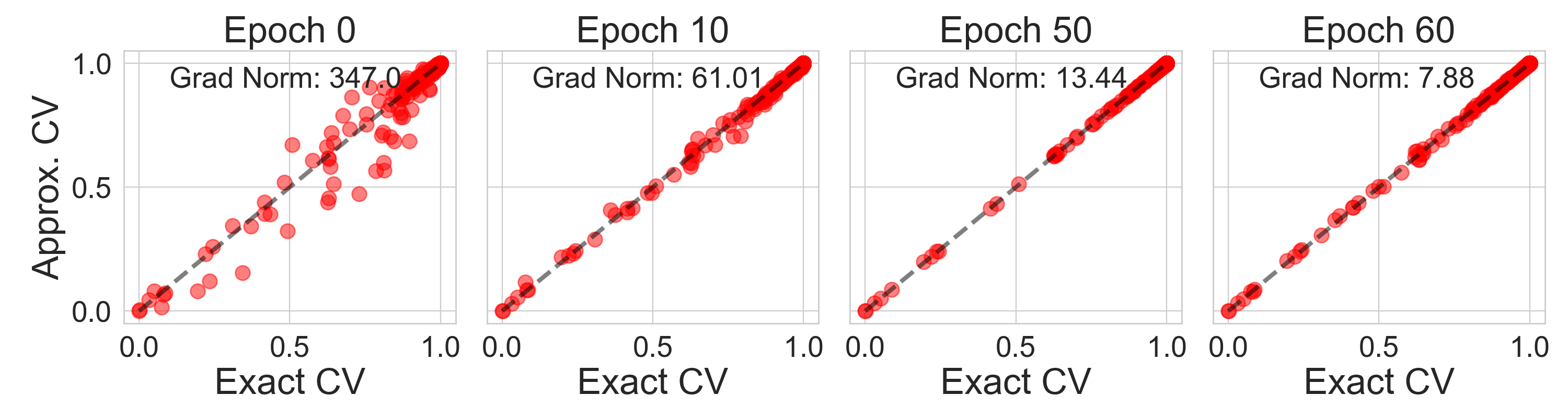}
\vskip -5pt
\caption{Behavior of ACV at different epochs in stochastic optimization for a bilstm-crf. Scatter plots comparing held out probabilities under CV  (horizontal axis) at each point in each fold (red dots) to our approximation of CV (vertical axis). Black dashed line shows perfect agreement. 
}
\label{fig:crf}
\end{figure}
Next, we examine the effect of using an inexact optimum $\paramstep{S}$, instead of the exact initial optimum $\paramone$, as the input in our approximations.
We consider \lscv for a bidirectional LSTM CRF (bilstm-crf)~\citep{huang2015bidirectional}, which has been found~\citep{lample2016neural, ma2016end, reimers2017optimal} to perform well for named entity recognition. 
In this case, our problem is supervised; the words in input sentences ($\xseq$) are annotated with entity labels ($\zseq$), such as organizations or locations. We trained the bilstm-crf model on the CoNLL-2003 shared task benchmark~\citep{sang2003introduction} using the English subset of the data and the pre-defined train/validation/test splits containing 14,987(=$\numseq$)/3,466/3,684 sentence annotation pairs. Here $\numsteps$ is the number of words in a sentence; it varies by sentence with a max of 113 and median of 9. The number of parameters $\numparams$ is 99. Following standard practice, we optimize the full model using stochastic gradient methods and employ early stopping by monitoring loss on the validation set. See \cref{subsec:neural_crf} for model architecture and optimization details. In our experiments, we hold the other network layers (except for the CRF layer) fixed, and report epochs for training on the CRF layer after full-model training; this procedure mimics some transfer learning methods \citep{huh2016makes}.

We consider 500 \lscv folds with one sentence (i.e., one $n$ index) per fold; the 500 points are chosen uniformly at random.
The four panels in \cref{fig:crf} show the behavior of our approximation (\cref{alg:supp_algorithm2} in \cref{app:crf_cv}) at different training epochs during the optimization procedure. 
To ensure invertibility of the Hessian when far from an optimum, we add a small ($10^{-5}$) regularizer to the diagonal. At each epoch, for each fold, we plot a red dot with the exact fold held out probability $ p(\zseq \mid \xseq ;\hat\param(\wseq{\{n\}})$ as its horizontal coordinate and our approximation $p(\zseq \mid \xseq ; \paramij(\wseq{\{n\}})$ as the vertical coordinate. Note that the \lscv loss has no dependence on other $n$ due to the model independence across $n$; see \cref{app:crf_cv}. 
Even in early epochs with larger gradient norms, every point lies close to the dashed black $x=y$ line. \cref{supp:crf_v_time} of \cref{subsec:neural_crf} further shows the mean absolute approximation error between the exact CV held out probability and our approximation, across all 500 folds as a function of log gradient norm and wall clock time. As expected, our approximation has higher quality at better initial fits. Nonetheless, we see that decay in performance away from the exact optimum is gradual.

\textbf{Beyond chain-structured graphs: Crime statistics in Philadelphia.} 
The models in our experiments above are all chain-structured. Next we consider our approximations to \lwcv in a spatial model with more complex dependencies. \citet{BalocchiJensen2019, Balocchi2020} have recently studied spatial models of crime in the city of Philadelphia. We here consider a (simpler) hidden MRF model for exposition: a Poisson mixture with spatial dependencies, detailed in \cref{subsec:philly}. Here, there is a single structure observation ($\numseq = 1$); there are $\numsteps = 384$ census tracts in the city; and there are $\numparams = 2$ parameters. The data is shown in the upper lefthand panel of \cref{fig:crime_tracts}.

We choose one point per fold in style (B) of \lwcv here, for a total of 384 folds.
We test our method across four fixed values of a hyperparameter $\beta$ that encourages adjacent tracts to be in the same latent state.
For each fold, we plot a red dot comparing the exact fold loss $-\log p(\xseqonet{t} \mid \mathbf{x}_{[\numsteps]-\{t\}}; \hat\param(\wseq{\{t\}}))$ with our approximation $-\log p(\xseqonet{t} \mid \mathbf{x}_{[\numsteps]-\{t\}}; \paramij(\wseq{\{t\}}))$. The results are in the lower four panels of \cref{fig:crime_tracts}, where we see uniformly small error across folds in our approximation. In the upper right panel of \cref{fig:crime_tracts}, we see that our method is orders of magnitude faster than exact CV.

\begin{figure}[ht]
\centering
	\includegraphics[width = 0.4\textwidth]{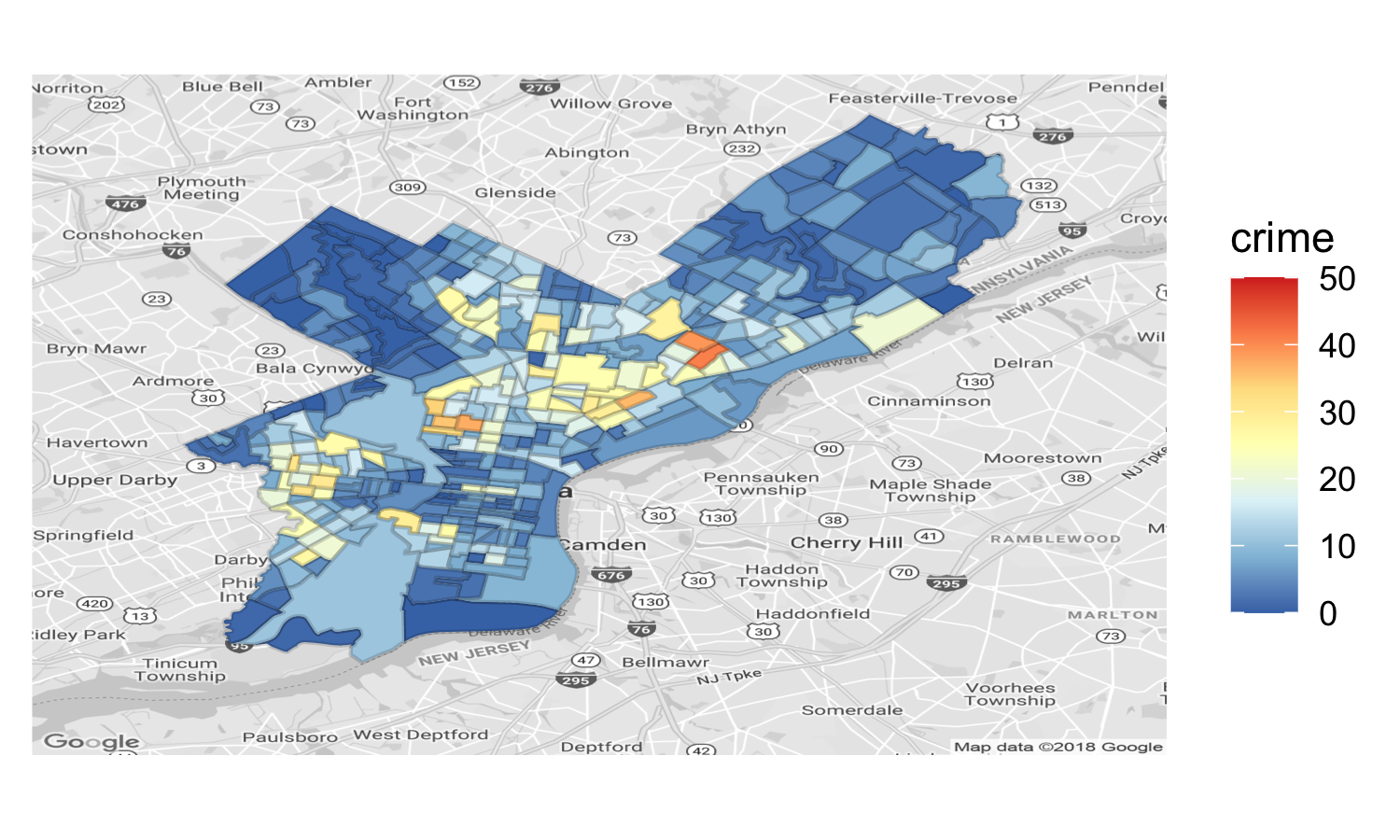}
	\includegraphics[width = 0.45\textwidth]{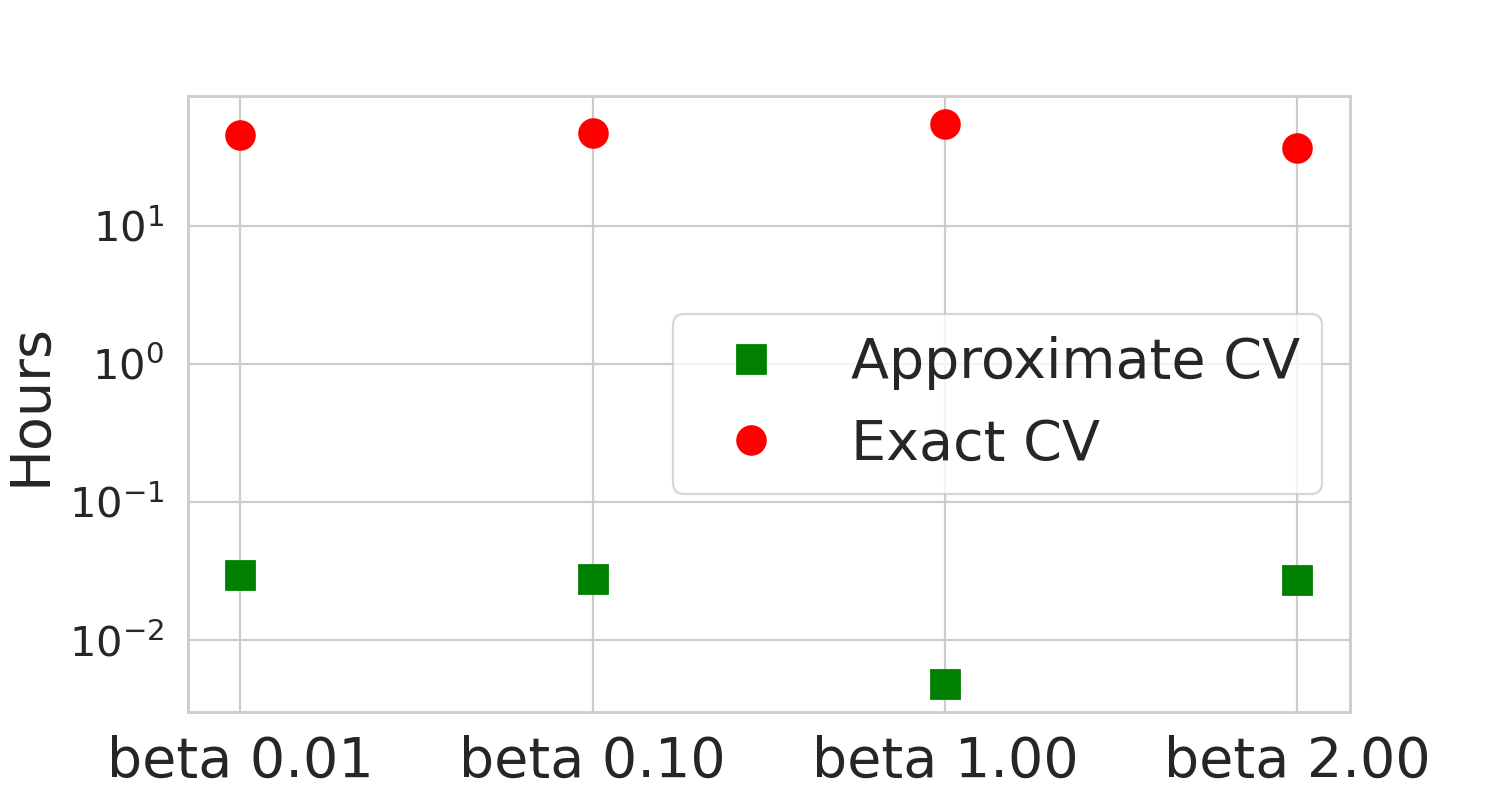}
	\includegraphics[width = 0.95\textwidth]{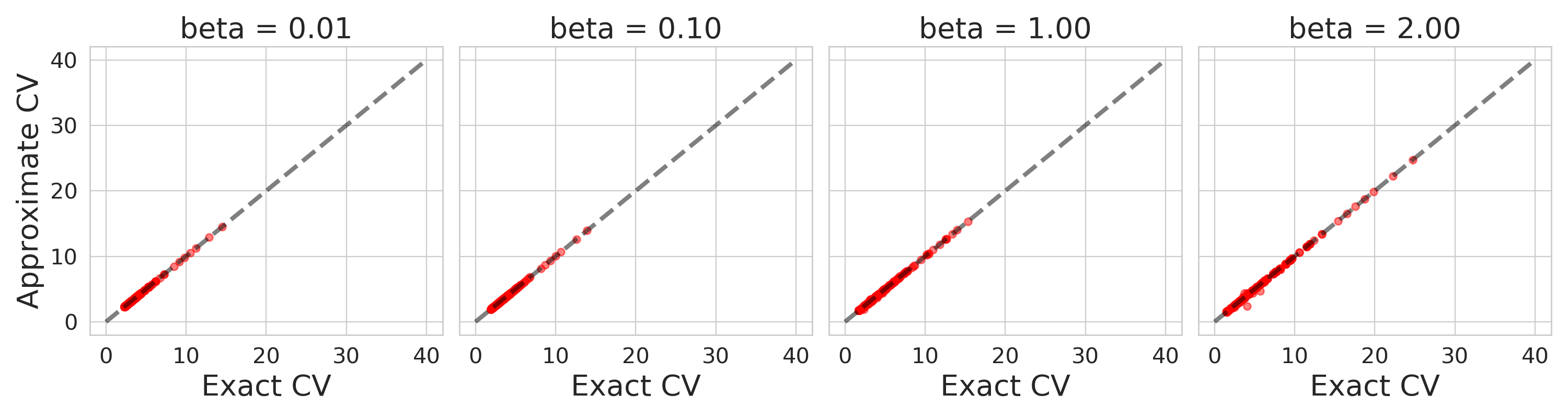}
	\vskip -5pt
	\caption{Evaluation of \lwcv for loopy Markov random field. \emph{(Top left)} Census tracts data. \emph{(Upper right)} Wall-clock time of approximate CV and exact CV. \emph{(Lower)} Scatter plots comparing CV loss (horizontal axis) at each point in each fold (red dots) to our approximation of CV loss (vertical axis). Black dashed line shows perfect agreement. Plots generated with different values of connectivity $\beta$.}
	\label{fig:crime_tracts}
\end{figure}
\vskip -15pt

\paragraph{Discussion.}

In this work, we have demonstrated how to extend approximate cross-validation (ACV) techniques to CV tasks with non-trivial dependencies between folds. We have also demonstrated that IJ approximations can retain their usefulness even when the initial data fit is inexact. While our motivation in the latter case was formed by complex models of dependent structures, our results are also applicable to, and novel for, the classic independence
framework of ACV. An interesting remaining challenge for future work is to address other sources of computational expense in structured models. For instance, even after computing $\paramlo$, inference can be expensive in very large graphical models; it remains to be seen if reliable and fast approximations can be found for this operation as well.


\section*{Broader Impact}
Accurate evaluation enables more reliable machine learning methods
and more trustworthy communication of their capabilities. To the
extent that machine learning methods may be  beneficial -- in that
they may be used to facilitate medical diagnosis, assistive technology
for individuals with motor impairments, or understanding of helpful
economic interventions -- accurate evaluation ensures these benefits
are fully realized. To the extent that machine learning methods may be
harmful -- in that they may used to facilitate the spread of false
information or privacy erosion -- accurate evaluation should still
make these methods more effective at their goals, even if societally
undesirable. As in any machine learning methodology, it is also important for the buyer to beware; 
while we have tried to pick a broad array of experimental settings and to support our methods with theory, 
there may remain cases of interest when our approximations fail without warning. In fact,
we take care to note that cross-validation and its points of failure are still not fully understood.
All of our results are relative to exact cross-validation -- since it is taken as the de facto standard
for evaluation in the machine learning community (not without reason \citep{musgrave2020metric}). But when
exact cross-validation fails, we therefore expect our method to fail as well.

	\paragraph{Acknowledgments.} 
	This work was supported by the MIT-IBM Watson AI Lab, DARPA, the CSAIL--MSR Trustworthy AI Initiative, an NSF CAREER Award, an ARO YIP Award, ONR, and Amazon. Broderick Group is also supported by the Sloan Foundation, ARPA-E, Department of the Air Force, and MIT Lincoln Laboratory. 

\bibliographystyle{abbrvnat}
\bibliography{refs}

\begin{thebibliography}{46}
\providecommand{\natexlab}[1]{#1}
\providecommand{\url}[1]{\texttt{#1}}
\expandafter\ifx\csname urlstyle\endcsname\relax
  \providecommand{\doi}[1]{doi: #1}\else
  \providecommand{\doi}{doi: \begingroup \urlstyle{rm}\Url}\fi

\bibitem[Balocchi and Jensen(2019)]{BalocchiJensen2019}
C.~Balocchi and S.~T. Jensen.
\newblock Spatial modeling of trends in crime over time in {P}hiladelphia.
\newblock \emph{The Annals of Applied Statistics}, 13\penalty0 (4):\penalty0
  2235--2259, 2019.

\bibitem[Balocchi et~al.(2019)Balocchi, Deshpande, George, and
  Jensen]{Balocchi2020}
C.~Balocchi, S.~K. Deshpande, E.~I. George, and S.~T. Jensen.
\newblock Crime in {P}hiladelphia: {B}ayesian clustering with particle
  optimization.
\newblock \emph{arXiv preprint arXiv:1912.00111}, 2019.

\bibitem[Bartholomew-Biggs et~al.(2000)Bartholomew-Biggs, Brown, Christianson,
  and Dixon]{biggs2000AD}
M.~Bartholomew-Biggs, S.~Brown, B.~Christianson, and L.~Dixon.
\newblock Automatic differentiation of algorithms.
\newblock \emph{Journal of Computational and Applied Mathematics}, 124, 2000.

\bibitem[Baydin et~al.(2018)Baydin, Pearlmutter, Radul, and
  Siskind]{baydin2015automatic}
A.~G. Baydin, B.~A. Pearlmutter, A.~A. Radul, and J.~M. Siskind.
\newblock Automatic differentiation in machine learning: {A} survey.
\newblock \emph{arXiv Preprint arXiv:1502.05767v4}, 2018.

\bibitem[Beirami et~al.(2017)Beirami, Razaviyayn, Shahin, and
  Tarokh]{beirami2017firstALOO}
A.~Beirami, M.~Razaviyayn, S.~Shahin, and V.~Tarokh.
\newblock On optimal generalizability in parametric learning.
\newblock In \emph{Advances in Neural Information Processing Systems (NIPS)},
  pages 3458--3468, 2017.

\bibitem[Bellare and McCallum(2007)]{bellare2007learning}
K.~Bellare and A.~McCallum.
\newblock Learning extractors from unlabeled text using relevant databases.
\newblock In \emph{Sixth {I}nternational {W}orkshop on {I}nformation
  {I}ntegration on the {W}eb}, 2007.

\bibitem[B{\"u}rkner et~al.(2020)B{\"u}rkner, Gabry, and
  Vehtari]{burkner2019approximate}
P.-C. B{\"u}rkner, J.~Gabry, and A.~Vehtari.
\newblock Approximate leave-future-out cross-validation for {B}ayesian time
  series models.
\newblock \emph{arXiv preprint arXiv:1902.06281}, may 2020.

\bibitem[Celeux and Durand(2008)]{celeux2008selecting}
G.~Celeux and J.-B. Durand.
\newblock Selecting hidden {M}arkov model state number with cross-validated
  likelihood.
\newblock \emph{Computational Statistics}, 23\penalty0 (4):\penalty0 541--564,
  2008.

\bibitem[DeCaprio et~al.(2007)DeCaprio, Vinson, Pearson, Montgomery, Doherty,
  and Galagan]{decaprio2007conrad}
D.~DeCaprio, J.~P. Vinson, M.~D. Pearson, P.~Montgomery, M.~Doherty, and J.~E.
  Galagan.
\newblock Conrad: {G}ene prediction using conditional random fields.
\newblock \emph{Genome research}, 17\penalty0 (9):\penalty0 1389--1398, 2007.

\bibitem[Dempster et~al.(1977)Dempster, Laird, and Rubin]{dempster1977maximum}
A.~P. Dempster, N.~M. Laird, and D.~B. Rubin.
\newblock Maximum likelihood from incomplete data via the {EM} algorithm.
\newblock \emph{Journal of the Royal Statistical Society: Series B
  (Methodological)}, 39\penalty0 (1):\penalty0 1--22, 1977.

\bibitem[Efron(1981)]{efron1981nonparametric}
B.~Efron.
\newblock Nonparametric estimates of standard error: {T}he jackknife, the
  bootstrap and other methods.
\newblock \emph{Biometrika}, 68\penalty0 (3):\penalty0 589--599, 1981.

\bibitem[Fox et~al.(2009)Fox, Jordan, Sudderth, and Willsky]{fox2009}
E.~Fox, M.~I. Jordan, E.~B. Sudderth, and A.~S. Willsky.
\newblock Sharing features among dynamical systems with {B}eta processes.
\newblock In \emph{Advances in Neural Information Processing Systems (NIPS)},
  pages 549--557, 2009.

\bibitem[Fox et~al.(2014)Fox, Hughes, Sudderth, and Jordan]{fox2014}
E.~B. Fox, M.~C. Hughes, E.~B. Sudderth, and M.~I. Jordan.
\newblock Joint modeling of multiple time series via the {B}eta process with
  application to motion capture segmentation.
\newblock \emph{The Annals of Applied Statistics}, 8\penalty0 (3):\penalty0
  1281--1313, 2014.

\bibitem[Geisser(1975)]{geisser1975earlyCV}
S.~Geisser.
\newblock The predictive sample reuse method with applications.
\newblock \emph{Journal of the American Statistical Association}, 70\penalty0
  (350):\penalty0 320--328, June 1975.

\bibitem[Giordano et~al.(2019)Giordano, Stephenson, Liu, Jordan, and
  Broderick]{giordano2018swiss}
R.~Giordano, W.~T. Stephenson, R.~Liu, M.~I. Jordan, and T.~Broderick.
\newblock A {S}wiss {A}rmy infinitesimal jackknife.
\newblock In \emph{International Conference on Artificial Intelligence and
  Statistics (AISTATS)}, 2019.

\bibitem[Huang et~al.(2015)Huang, Xu, and Yu]{huang2015bidirectional}
Z.~Huang, W.~Xu, and K.~Yu.
\newblock Bidirectional {LSTM-CRF} models for sequence tagging.
\newblock \emph{arXiv preprint arXiv:1508.01991}, 2015.

\bibitem[Hughes and Sudderth(2014)]{bnpy}
M.~C. Hughes and E.~B. Sudderth.
\newblock Bnpy: {R}eliable and scalable variational inference for {B}ayesian
  nonparametric models.
\newblock In \emph{NIPS Probabilistic Programimming Workshop}, pages 8--13,
  2014.

\bibitem[Hughes et~al.(2012)Hughes, Fox, and Sudderth]{hughes2012}
M.~C. Hughes, E.~Fox, and E.~B. Sudderth.
\newblock Effective split-merge {M}onte {C}arlo methods for nonparametric
  models of sequential data.
\newblock In \emph{Advances in Neural Information Processing Systems (NIPS)},
  pages 1295--1303, 2012.

\bibitem[Huh et~al.(2016)Huh, Agrawal, and Efros]{huh2016makes}
M.~Huh, P.~Agrawal, and A.~A. Efros.
\newblock What makes imagenet good for transfer learning?
\newblock \emph{arXiv preprint arXiv:1608.08614}, 2016.

\bibitem[Hyndman and Athanasopoulos(2018)]{hyndman2018forecasting}
R.~J. Hyndman and G.~Athanasopoulos.
\newblock \emph{Forecasting: {P}rinciples and practice}.
\newblock OTexts: Melbourne, Australia, 2018.

\bibitem[Ihler et~al.(2006)Ihler, Hutchins, and Smyth]{ihler2006adaptive}
A.~Ihler, J.~Hutchins, and P.~Smyth.
\newblock Adaptive event detection with time-varying {P}oisson processes.
\newblock In \emph{ACM SIGKDD International Conference on Knowledge Discovery
  and Data Mining}, pages 207--216, 2006.

\bibitem[Jaeckel(1972)]{jaeckel1972infinitesimal}
L.~Jaeckel.
\newblock The infinitesimal jackknife, memorandum.
\newblock Technical report, MM 72-1215-11, Bell Lab. Murray Hill, NJ, 1972.

\bibitem[Koh and Liang(2017)]{koh2017understanding}
P.~W. Koh and P.~Liang.
\newblock Understanding black-box predictions via influence functions.
\newblock In \emph{International Conference on Machine Learning (ICML)}, pages
  1885--1894. JMLR. org, 2017.

\bibitem[Koh et~al.(2019)Koh, Ang, Teo, and Liang]{koh2019accuracy}
P.~W.~W. Koh, K.-S. Ang, H.~Teo, and P.~S. Liang.
\newblock On the accuracy of influence functions for measuring group effects.
\newblock In \emph{Advances in Neural Information Processing Systems}, pages
  5255--5265, 2019.

\bibitem[Koller and Friedman(2009)]{koller2009probabilistic}
D.~Koller and N.~Friedman.
\newblock \emph{Probabilistic graphical models: {P}rinciples and techniques -
  adaptive computation and machine learning}.
\newblock The MIT Press, 2009.
\newblock ISBN 0262013193.

\bibitem[Lample et~al.(2016)Lample, Ballesteros, Subramanian, Kawakami, and
  Dyer]{lample2016neural}
G.~Lample, M.~Ballesteros, S.~Subramanian, K.~Kawakami, and C.~Dyer.
\newblock Neural architectures for named entity recognition.
\newblock \emph{arXiv preprint arXiv:1603.01360}, 2016.

\bibitem[Ma and Hovy(2016)]{ma2016end}
X.~Ma and E.~Hovy.
\newblock End-to-end sequence labeling via bi-directional {LSTM-CNNs-CRF}.
\newblock \emph{arXiv preprint arXiv:1603.01354}, 2016.

\bibitem[Musgrave et~al.(2020)Musgrave, Belongie, and Lim]{musgrave2020metric}
K.~Musgrave, S.~Belongie, and S.-N. Lim.
\newblock A metric learning reality check.
\newblock \emph{arXiv preprint arXiv:2003.08505}, 2020.

\bibitem[Obuchi and Kabashima(2016)]{obuchi2016linearALOO}
T.~Obuchi and Y.~Kabashima.
\newblock Cross validation in {LASSO} and its acceleration.
\newblock \emph{Journal of Statistical Mechanics}, May 2016.

\bibitem[Obuchi and Kabashima(2018)]{obuchi2018logisticALOO}
T.~Obuchi and Y.~Kabashima.
\newblock Accelerating cross-validation in multinomial logistic regression with
  l1-regularization.
\newblock \emph{Journal of Machine Learning Research}, Sept. 2018.

\bibitem[Pennington et~al.(2014)Pennington, Socher, and
  Manning]{pennington2014glove}
J.~Pennington, R.~Socher, and C.~D. Manning.
\newblock {G}love: {G}lobal vectors for word representation.
\newblock In \emph{Conference on Empirical Methods in Natural Language
  Processing (EMNLP)}, pages 1532--1543, 2014.

\bibitem[Rad and Maleki(2020)]{rad2018detailedALOO}
K.~R. Rad and A.~Maleki.
\newblock {A scalable estimate of the extra-sample prediction error via
  approximate leave-one-out}.
\newblock \emph{arXiv Preprint arXiv:1801.10243v4}, Jan. 2020.

\bibitem[Rangel et~al.(2004)Rangel, Angus, Ghahramani, Lioumi, Sotheran, Gaiba,
  Wild, and Falciani]{rangel2004modeling}
C.~Rangel, J.~Angus, Z.~Ghahramani, M.~Lioumi, E.~Sotheran, A.~Gaiba, D.~L.
  Wild, and F.~Falciani.
\newblock Modeling {T}-cell activation using gene expression profiling and
  state-space models.
\newblock \emph{Bioinformatics}, 20\penalty0 (9):\penalty0 1361--1372, 2004.

\bibitem[Reimers and Gurevych(2017)]{reimers2017optimal}
N.~Reimers and I.~Gurevych.
\newblock Optimal hyperparameters for deep {LSTM}-networks for sequence
  labeling tasks.
\newblock \emph{arXiv preprint arXiv:1707.06799v2}, 2017.

\bibitem[Sang and De~Meulder(2003)]{sang2003introduction}
E.~T.~K. Sang and F.~De~Meulder.
\newblock {Introduction to the CoNLL-2003 Shared Task: Language-Independent
  Named Entity Recognition}.
\newblock In \emph{Conference on Natural Language Learning at HLT-NAACL 2003},
  pages 142--147, 2003.

\bibitem[Stephenson and Broderick(2020)]{stephenson2019sparse}
W.~T. Stephenson and T.~Broderick.
\newblock Approximate cross-validation in high dimensions with guarantees.
\newblock In \emph{International Conference on Artificial Intelligence and
  Statistics (AISTATS)}, 2020.

\bibitem[Stone(1974)]{stone1974earlyCV}
M.~Stone.
\newblock Cross-validatory choice and assessment of statistical predictions.
\newblock \emph{Journal of the American Statistical Association}, 36\penalty0
  (2):\penalty0 111--147, 1974.

\bibitem[Sukkar et~al.(2012)Sukkar, Katz, Zhang, Raunig, and Wyman]{sukkar12}
R.~Sukkar, E.~Katz, Y.~Zhang, D.~Raunig, and B.~T. Wyman.
\newblock Disease progression modeling using hidden {M}arkov models.
\newblock In \emph{2012 Annual International Conference of the IEEE Engineering
  in Medicine and Biology Society}, pages 2845--2848. IEEE, 2012.

\bibitem[Sun et~al.(2019)Sun, Ghosh, Li, Cheng, Mohan, Sampaio, and Hu]{sun19}
Z.~Sun, S.~Ghosh, Y.~Li, Y.~Cheng, A.~Mohan, C.~Sampaio, and J.~Hu.
\newblock A probabilistic disease progression modeling approach and its
  application to integrated {H}untington's disease observational data.
\newblock \emph{JAMIA Open}, 2\penalty0 (1):\penalty0 123--130, 2019.

\bibitem[Tsuboi et~al.(2008)Tsuboi, Kashima, Mori, Oda, and
  Matsumoto]{tsuboi2008training}
Y.~Tsuboi, H.~Kashima, S.~Mori, H.~Oda, and Y.~Matsumoto.
\newblock Training conditional random fields using incomplete annotations.
\newblock In \emph{International Conference on Computational Linguistics
  (Coling 2008)}, pages 897--904, 2008.

\bibitem[Vehtari et~al.(2017)Vehtari, Gelman, and Gabry]{vehtari2017practical}
A.~Vehtari, A.~Gelman, and J.~Gabry.
\newblock Practical {Bayesian} model evaluation using leave-one-out
  cross-validation and {WAIC}.
\newblock \emph{Statistics and Computing}, 27\penalty0 (5):\penalty0
  1413--1432, 2017.

\bibitem[Vershynin(2018)]{vershynin2017hdpBook}
R.~Vershynin.
\newblock \emph{High-dimensional probability: {A}n introduction with
  applications in data science}.
\newblock Cambridge University Press, August 2018.

\bibitem[Wang et~al.(2018)Wang, Zhou, Lu, Maleki, and
  Mirrokni]{wang2018primalDualALOO}
S.~Wang, W.~Zhou, H.~Lu, A.~Maleki, and V.~Mirrokni.
\newblock Approximate leave-one-out for fast parameter tuning in high
  dimensions.
\newblock In \emph{International Conference in Machine Learning (ICML)}, 2018.

\bibitem[Wang et~al.(2014)Wang, Sontag, and Wang]{wang14}
X.~Wang, D.~Sontag, and F.~Wang.
\newblock Unsupervised learning of disease progression models.
\newblock In \emph{Proceedings of the 20th ACM SIGKDD International Conference
  on Knowledge Discovery and Data Mining}, pages 85--94, 2014.

\bibitem[Wilson et~al.(2020)Wilson, Kasy, and
  Mackey]{wilson2020modelSelectionALOO}
A.~Wilson, M.~Kasy, and L.~Mackey.
\newblock Approximate cross-validation: {G}uarantees for model assessment and
  selection.
\newblock In \emph{International Conference on Artificial Intelligence and
  Statistics (AISTATS)}, 2020.

\bibitem[Zheng and Liu(2017)]{zheng2017estimating}
J.~Zheng and H.~X. Liu.
\newblock Estimating traffic volumes for signalized intersections using
  connected vehicle data.
\newblock \emph{Transportation Research Part C: Emerging Technologies},
  79:\penalty0 347--362, 2017.

\end{thebibliography}
\newpage
\appendix

\section{Related work: Approximate CV methods}
\label{app:related_work}

A growing body of recent work has focused on various methods for approximate CV (ACV). As outlined in the introduction, these methods generally take one of two forms. The first is based on taking a single Newton step on the leave-out objective starting from the full data fit, $\hat\param$. This approximation was first proposed by \citet{obuchi2016linearALOO, obuchi2018logisticALOO} for the special cases of linear and logistic regression and first applied to more general models by \citet{beirami2017firstALOO}. While this approximation is generally applicable to any CV scheme (e.g.\ beyond LOOCV) and any model type (e.g.\ to structured models), it is only efficiently applicable to LOOCV for GLMs. In particular, approximating each $\paramlo$ requires the computation and inversion of the leave-out objective's $D \times D$ Hessian matrix. In the case of LOOCV GLMs, this computation can be performed quickly using standard rank-one matrix updates; however, in more general settings, no such convenience applies. 
\\\\
Various works detail the theoretical properties of the NS approximation.
\citet{beirami2017firstALOO,rad2018detailedALOO} provide some of the first bounds on the quality of the NS approximation, but under fairly strict assumptions.
\citet{beirami2017firstALOO} assume boundedness of both the parameter and data spaces, while \citet{rad2018detailedALOO} require somewhat hard-to-check assumptions about the regularity of each leave-out objective (although they successfully verify their assumptions on a handful of problems). 
\citet{koh2019accuracy} prove bounds on the accuracy of the NS approximation with fairly standard assumptions (e.g.\ Lipschitz continuity of higher-order derivatives of the objective function), but restricted to models using $\ell_2$ regularization. \citet{wilson2020modelSelectionALOO} also prove bounds on the accuracy of NS using slightly more complex assumptions but avoiding the assumption of $\ell_2$ regularization. 
More importantly, \citet{wilson2020modelSelectionALOO} also address the issue of model selection, whereas all previous works had focused on the accuracy of NS for model assessment (i.e.\ assessing the error of a single, fixed model). In particular, \citet{wilson2020modelSelectionALOO} give assumptions under which the NS approximation is accurate when used for hyperparameter tuning.
\\\\
Finally, we note that in its simplest form, the NS approximation requires second differentiability of the model objective. \citet{obuchi2016linearALOO,obuchi2018logisticALOO,rad2018detailedALOO,beirami2017firstALOO,stephenson2019sparse}
propose workarounds specific to models using $\ell_1$-regularization. 
More generally, \citet{wang2018primalDualALOO} provide a natural extension of the NS approximation to models with either non-differentiable model losses or non-differentiable regularizers.
\\\\
Again, while these NS methods can be applied to the structured models of interest here, the repeated computation and inversion of Hessian matrices brings their speed into question. To avoid this issue, we instead focus on approximations based on the infinitestimal jackknife (IJ) from the statistics literature \citep{jaeckel1972infinitesimal,efron1981nonparametric}. 
The IJ was recently conjectured as a potential approximation to CV by \citet{koh2017understanding} and then briefly compared against the NS approximation for this purpose by \citet{beirami2017firstALOO}. The IJ was first studied in depth for approximating CV in an empirical and theoretical study by \citet{giordano2018swiss}. 
The benefit of the IJ in our application is that for any CV scheme\footnote{The methods of \citet{giordano2018swiss} apply beyond CV to other ``reweight and retrain'' schemes such as the bootstrap. The methods presented in our paper apply more generally as well, although we do not explore this extension.} and any (differentiable and i.i.d.) model, the IJ requires only a single matrix inverse to approximate all CV folds. 
\citet{koh2019accuracy} give further bounds on the accuracy of the IJ approximation for models using $\ell_2$ regularization. 
As in the case for NS, \citet{wilson2020modelSelectionALOO} give bounds on the accuracy of IJ beyond $\ell_2$ regularized models but with slightly more involved assumptions; \citet{wilson2020modelSelectionALOO} also give bounds on the accuracy of IJ for model selection.
\\\\
Just as for the NS approximation, the IJ also requires second differentiability of the model objective. \citet{stephenson2019sparse} deal with this issue by noting that the methods of \citet{wang2018primalDualALOO} for applying the NS to non-differentiable objectives can be extended to cover the IJ as well. We note that the use of the IJ for model selection for non-differentiable objectives seems to be more complex than for the NS approximation. In particular, \citet[Appendix G]{stephenson2019sparse} show that the IJ approximation can have unexpected and undesirable behavior when used for tuning the regularization parameter for $\ell_1$ regularized models. \citet{wilson2020modelSelectionALOO} resolve this issue by proposing a further modification to the IJ approximation based on proximal operators.

\section{Hidden Markov random fields}
\label{sec:supp_examples}
Here we show that HMMs are instances of (hidden) MRFs. Recall that a MRF models the joint distribution,
\begin{equation}
	\label{eq:supp_mrf}
	-\log p(\xseqone, \zseqone; \param )
		= - \sum_{n \in [N]} \log p(\xseq, \zseq; \Theta) = 
		Z(\param) +
		\sum_{n=1}^{N}
		\left\{ \left[
    			\sum_{t \in [T]} \psi_{t}(\xseqt{t}, \zseqt{t}; \param)
    			\right]
    		+ \left[ 
    			\sum_{\textbf{c} \in \factors} \phi_{\textbf{c}}( \zseqt{\textbf{c}}; \param)
    			\right]
		\right\}.
\end{equation}
\paragraph{Hidden Markov models}
We recover hidden Markov models as described in Section~\ref{sec:background} by setting $\psi_t(\xseqt{t}, \zseqt{t}; \Theta) = \log F(\xseqt{t} \mid \theta_{\zseqt{t}})$, setting $\mathcal{F}$ to the set of all unary and pairwise indices, and defining $\phi_{t, t-1}(\zseqt{t}, \zseqt{t-1}; \Theta) = \log \text{Cat}(\zseqt{t} \mid A_{\zseqt{t-1}})$, and $\phi_{1}(\zseqt{1}) = \log \text{Cat}(\zseqt{1} \mid \pi)$, and $\phi_t(\zseqt{t}) = 0$, for $t \in [T] - 1$. The log normalization constant is $Z(\Theta) = 0$.

\section{Leave structure out cross-validation (LSCV)}
\label{app:LSCV}
\begin{algorithm}
\caption{Structured approximate \lscv}
\label{alg:supp_algorithm}
\begin{algorithmic}[1] 
\REQUIRE $\param_1, \xseqone, \mathcal{O} $\\
\STATE Marginalize over $\zseq$: $\log p(\xseq ; \param) =  \textsc{Marg}(\xseq ;\param)$, $\forall n \in [N]$
\STATE Compute $\log p(\xseqone ; \Theta, \wseqone) = \sum_n w_n \log p(\xseq ; \param)$
\STATE Compute $H = \frac{\partial^2 \log p(\xseqone; \param, \wseqone) + \log p(\param)}{\partial\param \partial \param^\top} \bigg|_{\param = \param_1, \wseqone = \mathbf{1}_\numseq}$ 
\STATE Compute $J = (J_{dn}) := \bigg(\frac{\partial^2 \log p(\xseqone; \param, \wseqone) + \log p(\param)}{\partial\param \partial w_n} \bigg|_{\param = \param_1, \wseqone = \mathbf{1}_\numseq} \bigg)$
\STATE \textbf{for} {$\leftoutinds \in \mathcal{O}$},  $\paramalg(\param_1, \xseqone, \leftoutinds) := \displaystyle \param_1 + \sum_{n \in \mathbf{o}}H^{-1}J_n$  \hfill \# $J_n$ is $n$th column of $J$
\RETURN $\{ \paramalg(\param_1, \xseqone, \leftoutinds) \}_{\leftoutinds \in \mathcal{O}}$
\end{algorithmic}
\end{algorithm}
%
%
\section{Efficient weighted marginalization (\textsc{WeightedMarg}) for chain-structured MRFs}
\label{app:weighted_marg}
For chain-structured pairwise MRFs with discrete structure, we can use a dynamic program to efficiently marginalize out the structure. Assume that $z_t \; \forall t \in [T]$ can take one of $K$ values. Define $\alpha_{1k} = \exp[ w_1 \psi_1(x_1, z_1 = k)]$, and then compute $\alpha_{tk}$ recursively:
\begin{equation}
	\alpha_{t,k} = \sum_{\ell = 1}^K \alpha_{t-1,\ell} \exp\bigg[ w_t \psi_t(x_t, z_{t} = k; \param) + \phi_{t,t-1}(z_t = k, z_{t-1} = \ell) \bigg],
	\label{eq:supp_alpha_A}
\end{equation}
if using weighting scheme (A) from Equation~\cref{eq:mrf_cv_data_A} or,
\begin{equation}
	\alpha_{t,k} = \sum_{\ell = 1}^K \alpha_{t-1,\ell} \exp\bigg[ w_t \psi_t(x_t, z_{t} = k; \param) + w_t w_{t-1} \phi_{t,t-1}(z_t = k, z_{t-1} = \ell) \bigg],
	\label{eq:supp_alpha_B}
\end{equation}
if using weighting scheme (B) from Equation~\cref{eq:mrf_cv_data_B}.
Then, for either (A) or (B), we have $p(\xseqone ; \param, \wseqone) = \sum_{k=1}^K \alpha_{Tk}$. When $\wseqone = \mathbf{1}_T$ we recover the empirical risk minimization solution. 
As is the case for non-weighted models, this recursion implies that $p(\xseqone ; \Theta, \wseqone)$ is computable in $O(TK^2Q)$ time instead of the usual $O(T^KQ)$ time required by brute-force summation (recall $Q$ is the time required to evaluate one local potential). 
Likewise, we can also compute the derivatives needed by \cref{alg:main_algorithm} in $O(TK^2Q)$ time either by manual implementation or automatic differentiation tools~\citep{biggs2000AD}.
\section{Equivalence of weighting (A) and (B) for leave-future-out for chain-structured graphs}
\label{app:chain_equivalence}
As noted in the main text, weighting schemes (A) and (B) are equivalent when the graph is chain structured. Formally,
\begin{prop}
\label{prop:chain_equivalence}
	Consider a chain-structured pairwise MRF with ordered indices $t$ on the chain (such as an HMM). 
	Weighting styles (A) and (B) above are equivalent for \emph{leave-future-out} CV. That is, choose $\leftoutinds = \{T', \ldots, T\}$ for some $T' \in [T-1]$ (i.e., indices that are in the ``future'' when interpreted as time). Then set $\forall t \in \leftoutinds, \wt = 0$ and $\forall t \in [\numsteps] - \leftoutinds, \wt = 1$. 
\end{prop}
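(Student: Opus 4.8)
The plan is to show that weighting schemes (A) and (B) produce \emph{identical} learning objectives in \cref{eq:mrf_cv_data_A,eq:mrf_cv_data_B} up to an additive constant that does not depend on $\param$, so that they yield the same fit $\hat\param(\wseqo)$; since the per-fold CV loss $-\log p(\xseqonet{\leftoutinds}\mid \xseqonet{[\numsteps]-\leftoutinds}; \cdot)$ is the same function of its parameter argument in both schemes, equality of the fits gives equality of the CV estimates. First I would compare the two weighted joints term by term for the prescribed $\wseqo$ (with $\wt=0$ iff $t\in\leftoutinds=\{T',\dots,\numsteps\}$). The emission factors $\psi_t$ for $t\ge T'$ are switched off in \emph{both} schemes; the only discrepancy is in the pairwise factors, which scheme~(A) leaves untouched but scheme~(B) scales by $\wt\wtt{t-1}$. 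This product is nonzero exactly when $t\le T'-1$, so relative to~(B), scheme~(A) additionally retains the ``bridge'' factor $\phi_{T',T'-1}$ and every pairwise factor $\phi_{t,t-1}$ with $t>T'$ sitting among the left-out latents.

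Next I would marginalize out the left-out latents $\zseqonet{T'},\dots,\zseqonet{\numsteps}$ using the chain structure; the cleanest route is to run the forward recursion of \cref{app:weighted_marg}. For both weighting styles the messages $\alpha_{t,k}$ coincide for every $t\le T'-1$, since there $\wt=\wtt{t-1}=1$ and the two updates collapse to the same thing. For $t\ge T'$, scheme~(B) has all emission and pairwise weights zero, so $\alpha_{t,\cdot}$ becomes constant in its state index and each step multiplies the running total by $\numhmmstates$, giving $p(\xseqone;\param,\wseqo)=\numhmmstates^{\,\numsteps-T'+1}\sum_k\alpha_{T'-1,k}$. Under scheme~(A) the left-out latents carry no emission but remain linked by transition factors; peeling them off from the end of the chain, each summation $\sum_{\zseqonet{t}}\exp[\phi_{t,t-1}(\zseqonet{t},\zseqonet{t-1};\param)]$ equals $1$ for a locally normalized chain such as an HMM (the columns of the transition matrix lie in $\simplex_{\numhmmstates-1}$), so these terms telescope away and $p(\xseqone;\param,\wseqo)=\sum_k\alpha_{T'-1,k}$. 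Because the messages $\alpha_{T'-1,\cdot}$ are literally the same object in both computations, the two marginal likelihoods differ only by the $\param$-independent factor $\numhmmstates^{\,\numsteps-T'+1}$; adding $-\log p(\param)$ preserves this, so the objectives share a minimizer and the claim follows.

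The step I expect to be the main obstacle is justifying that marginalizing a terminal left-out latent against its transition factor contributes only a $\param$-independent constant: this is immediate for HMMs and any locally normalized chain, but for a general chain-structured pairwise MRF one must additionally track the negative log normalizer $Z(\param,\wseqone)$ and verify that its $\wseqone$-dependence differs between the two schemes by the same constant. Carrying out the argument directly on the forward messages $\alpha_{t,k}$, as sketched above, sidesteps most of this bookkeeping, since the factors involving only $\zseqonet{1},\dots,\zseqonet{T'-1}$ — and hence any normalization they induce — enter both computations identically.
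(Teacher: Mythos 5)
Your proof is correct and follows essentially the same route as the paper's: both arguments marginalize the left-out latents off the end of the chain and use local normalization of the transition factors ($\sum_{z_t} p(z_t \mid z_{t-1}) = 1$) to make them telescope away. The only difference is presentational --- you carry the argument on the forward messages $\alpha_{t,k}$ and conclude that the two objectives agree up to the $\param$-independent constant $K^{T-T'+1}$ (which appears because the recursion still sums over the dropped latents with trivial factors), whereas the paper manipulates the marginal directly and gets literal equality; either way the minimizers, and hence the CV estimates, coincide.
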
 
This result does not hold generally beyond chain-structured graphical models -- consider a four-node ``ring'' graph in which node $t$ is connected to nodes $t-1$ and $t+1$ (mod 4) for $t = 0, \dots, 3$. Weighting scheme (B) produces a distribution that is chain-structured over three nodes, whereas (A) produces a distribution without such conditional independence properties. We now prove \cref{prop:chain_equivalence}.
\begin{proof}

Recall that for a chain structured graph, we can write:
$$
	p(x,z) = p(x \mid z) p(z_1) \prod_{t=2}^T p(z_t \mid z_{t-1}).
$$
Let $\mathbf{o} = \{ T', T'+1, \dots, T\}$ for some $T' < T$; that is, we are interested in dropping out time steps $T', \dots, T$. For weighting scheme (A) (\cref{eq:mrf_cv_data_A}), we drop out only the observations, obtaining:
$$ 
p_A(x,z ; w_{\mathbf{o}}) = \left( \prod_{t=1}^{T'-1} p(x_t \mid z_t) \right) p(z_1) \prod_{t=2}^T p(z_t \mid z_{t-1}).
$$
When we sum out all $z$ to compute the marginal $p_A (x ; w_{\mathbf{o}})$, we can first sum over $z_T, \dots, z_{T'}$. As $\sum_{z_t} p(z_t \mid z_{t-1}) = 1$ for any value of $z_{T-1}$, we obtain:
$$
	p_A(x; w_\mathbf{o}) = \sum_{z_1, \dots, z_{T'-1}} \left( \prod_{t=1}^{T'-1} p(x_t \mid z_t) \right) p(z_1) \prod_{t=2}^{T'-1} p(z_t \mid z_{t-1}),
$$
which is exactly the formula for $p_B(x ; w_\mathbf{o})$, the marginal likelihood from following weighting scheme (B) (\cref{eq:mrf_cv_data_B}), in which we drop out both the $x_t$ and $z_t$ for $t \not\in \mathbf{o}$.
\end{proof}
\section{Conditional random fields}
\label{sec:supp_crf}
Conditional random fields assume that the \emph{labels} $\zseqone$ are observed and model the conditional distribution $p(\zseqone \mid \xseqone; \param)$. While more general dependencies between $\xseqone$ and $\zseqone$ are possible a commonly used variant~\citep{ma2016end, lample2016neural} captures the conditional distribution of the joint defined in Equation~\cref{eq:supp_mrf}. Note,
\begin{equation}
\begin{split}
\log p(\zseq \mid \xseq; \Theta) &= \log p(\xseq, \zseq; \Theta) - \log p(\xseq; \Theta) \\
 &= - Z(\Theta) + 
    			\sum_{t \in [T]} \psi_{t}(\xseqt{t}, \zseqt{t}; \param)
    		+   \sum_{\textbf{c} \in \factors} \phi_{\textbf{c}}( \zseqt{\textbf{c}}; \param) \\
 &  + Z(\Theta) - \int_{\zseq} 
    			\sum_{t \in [T]} \psi_{t}(\xseqt{t}, \zseqt{t}; \param)
    		+   \sum_{\textbf{c} \in \factors} \phi_{\textbf{c}}( \zseqt{\textbf{c}}; \param) 	d\zseq	
\end{split}
\end{equation}
Defining, $Z(\xseq; \Theta) = -\int_{\zseq} 
    			\sum_{t \in [T]} \psi_{t}(\xseqt{t}, \zseqt{t}; \param)
    		+   \sum_{\textbf{c} \in \factors} \phi_{\textbf{c}}( \zseqt{\textbf{c}}; \param) 	d\zseq$, then gives us the following conditional distribution,
 \begin{equation}
 -\log p(\zseqone \mid \xseqone; \param) = \sum_{n=1}^N \left\{ Z(\xseq; \param) + \sum_{t \in [T]} \psi_{t}(\xseqt{t}, \zseqt{t}; \param) +  \sum_{\textbf{c} \in \factors} \phi_{\textbf{c}}(\zseqt{\textbf{c}}; \param)\right\}.
\end{equation}
Note that $Z(\xseq; \param)$ is an observation specific negative normalization constant. 
\section{CV for conditional random fields}
\label{app:crf_cv}
Analogously to the MRF case, we have two variants for CRFs --- \lscv and \lwcv. While \lscv is frequently used in practice, for example, \citep{decaprio2007conrad}, we are unaware of instances of \lwcv in the literature. Thus, while we derive approximations to both CV schemes, our CRF-based experiments in \cref{sec:exp} only use \lscv.

\subsection{\lscv for CRFs}
Leave structure out CV is analogous to the MRF case and is detailed in \cref{alg:supp_algorithm2}, where $\log \tilde{p}(\zseq, \xseq; \Theta) := \sum_{t \in [T]} \psi_{t}(\xseqt{t}, \zseqt{t}; \param)
    		+   \sum_{\textbf{c} \in \factors} \phi_{\textbf{c}}( \zseqt{\textbf{c}}; \param)$. Since all input, label pairs $\{\xseq, \zseq\}$ are independent, $\log p(\zseqone \mid \xseqone ; \Theta, \wseqone)$ is just a weighted sum across $n$ and the losses $ - \log p(\zseq \mid \xseq ;\hat\param(\wseq{\{n\}}) )$ and $-\log p(\zseq \mid \xseq ; \paramij(\wseq{\{n\}}))$ do not depend on $[N] - n$.
\begin{algorithm}[ht]
\caption{Structured approximate cross-validation (\lscv) for CRFs}
\label{alg:supp_algorithm2}
\begin{algorithmic}[1] 
\REQUIRE $\param_1, \xseqone, \zseqone, \mathcal{O} $\\
\STATE Compute $Z(\xseq; \Theta) = -\textsc{Marg}(\xseq ;\param)$, $\forall n \in [N]$
\STATE Compute $\log p(\zseqone \mid \xseqone ; \Theta, \wseqone) = \sum_n w_n \big[Z(\xseq; \Theta) + \log \tilde{p}(\zseq, \xseq; \Theta) \big]$
\STATE Compute $H = \frac{\partial^2 \log p(\zseqone \mid \xseqone; \param, \wseqone) + \log p(\param)}{\partial\param \partial \param^\top} \bigg|_{\param = \param_1, \wseqone = \mathbf{1}_\numseq}$ 
\STATE Compute matrix $J := (J_{dn})= \bigg(\frac{\partial^2 \log p(\zseqone \mid \xseqone; \param, \wseqone) + \log p(\param)}{\partial\param \partial w_n} \bigg|_{\param = \param_1, \wseqone = \mathbf{1}_\numseq}\bigg)$
\STATE \textbf{for} {$\leftoutinds \in \mathcal{O}$},  $\paramalg(\param_1, \xseqone, \zseqone, \leftoutinds) := \displaystyle \param_1 + \sum_{n \in \mathbf{o}}H^{-1}J_n$  \hfill \# $J_n$ is $n$th column of $J$
\RETURN $\{ \paramalg(\param_1, \xseqone, \zseqone, \leftoutinds) \}_{\leftoutinds \in \mathcal{O}}$
\end{algorithmic}
\end{algorithm}

\subsection{\lwcv for CRFs}
Leave within structure out for CRFs again comes with a choice of weighting scheme. Given a single input, label pair $\xseqone, \zseqone$, the $z_t$ are the outputs at location $t$, and the $x_t$ are the corresponding inputs. A form of CV arises when we drop the outputs $z_t$, for $t \in \mathbf{o}$. This gives us weighting scheme (C), 
\begin{equation}
	\label{eq:crf_cv_data_C}
	\begin{split}
	\hat\param(\wseqone)
		= &\argmin{\param}
			Z(\param, \wseqone, \xseqone) \\
		&+ \left[
    			\sum_{t \in [T]} \wt \psi_{t}(\xseqonet{t}, \zseqonet{t}; \param) + (1 - \wt) \int_{\zseqonet{t}}\psi_{t}(\xseqonet{t}, \zseqonet{t}; \param)	\; d\zseqonet{t}
    			\right] \\
    		&+ \left[ 
    			\wt\sum_{\textbf{c} \in \factors} \phi_{\textbf{c}}( \zseqonet{\textbf{c}}; \param) + (1-\wt)\int_{\zseqonet{t}}\sum_{\textbf{c} \in \factors}\phi_{\textbf{c}}( \zseqonet{\textbf{c}}; \param)\; d\zseqonet{t}
    			\right]
		- \log p(\param)
		.
	\end{split}
\end{equation}
For linear chain structured CRFs with discrete outputs $\zseqone$ a variant of the forward algorithm can be used to efficiently compute $Z(\param, \wseqone, \xseqone)$ as well as the marginalizations over $\{z_t \mid t \in \mathbf{o}\}$ required by \cref{eq:crf_cv_data_C}. See~\citet{bellare2007learning, tsuboi2008training} for details. \cref{alg:supp_algorithm3} summarizes the steps involved.

\begin{algorithm}
\caption{Approximate leave-within-structure-out cross-validation for CRFs}
\label{alg:supp_algorithm3}
\begin{algorithmic}[1] 
\REQUIRE $\param_1, \xseqone, \zseqone, \mathcal{O} $\\
\STATE Compute \emph{unweighted} marginalization over $\zseqone$, $Z(\xseqone; \Theta) = -\textsc{Marg}(\xseq ;\param)$, $\forall n \in [N]$
\STATE Compute \emph{weighted} marginalization over $\zseqone$: $Z(\xseqone; \param, \wseqone) =  \textsc{WeightedMarg}(\xseqone, \param, \wseqone)$. 
\STATE Compute $\log p(\zseqone \mid \xseqone ; \param) =  Z(\xseqone; \param, \wseqone) + Z(\xseq; \Theta)$
\STATE Compute $H = \frac{\partial^2 \log p(\xseqone; \param, \wseqone) + \log p(\param)}{\partial\param \partial \param^\top} \bigg|_{\param = \param_1, \wseqone = \mathbf{1}_\numsteps}$ 
\STATE Compute matrix $J := (J_{dt}) = \bigg(\frac{\partial^2 \log p(\xseqone; \param, \wseqone) + \log p(\param)}{\partial\param \partial \wt} \bigg|_{\param = \param_1, \wseqone = \mathbf{1}_\numsteps}\bigg)$
\STATE \textbf{for} $\leftoutinds \in \mathcal{O}$, \textbf{do:} \, $\paramalg(\param_1, \xseqone, \leftoutinds) := \displaystyle \param_1 + \sum_{t \in \leftoutinds}H^{-1}J_t$ \hfill \# $J_t$ is $t$th column of $J$ 
\RETURN $\{ \paramalg(\param_{\mathbf{1}}, \xseqone, \leftoutinds) \}_{\leftoutinds \in \mathcal{O}}$
\end{algorithmic}
\end{algorithm}

\section{Computational cost of one Newton-step-based ACV}
\label{app:NS}
Recall that we define $M$ to be the cost of one marginalization over the latent structure $z$ and noted above that the cost of computing the Hessian via automatic differentiation is $O(M)$. For the Newton step (NS) approximation, recall that we need to compute a different Hessian for each fold $\leftoutinds$. While this can be avoided using rank-one update rules in the case of leave-one-out CV for generalized linear models, this is not the case for the CV schemes and models considered here.
Thus, to use the Newton step approximation here, we require $O(M\abs{\mathcal{O}})$ time to compute all needed Hessians. Compared to the $O(M)$ time spent computing Hessians by our algorithms, the Newton step is significantly more expensive. For this reason, we do not consider Newton step based approximations here.
\section{Comparison of approximations afforded by one Newton-step-based and IJ based ACV}
\label{app:IJ_v_NS}
We revisit the \lwcv experiments in time varying Poisson processes described in \cref{sec:exp}. We agian focus on the $\numsteps_{sub} = 10{,}000$ subset of observations, plotted in the top panel of \cref{fig:dodgers_withinseq}. In \cref{app:fig_NS_v_IJ} we compare estimates provided by ACV based on one NS to those provided by IJ based ACV. The left plot depicts i.i.d \lwcv and the right depicts contiguous \lwcv when $m=10\%$ of the subset is held out. Similar results hold for $m = 2\%$ and $m=5\%$. For each of $\numfolds = 10$ folds and for each point $\xseqonet{t}$ left out in each fold, we plot a red dot with the NS bases approximate fold loss as its horizontal coordinate and our IJ based approximation as its vertical coordinate. We can see that every point lies close to the dashed black $x=y$ line; that is, the quality of the two approximations largely agree across the thousands of points in each plot.
\begin{figure}[ht]
\centering
\includegraphics[width=1\textwidth]{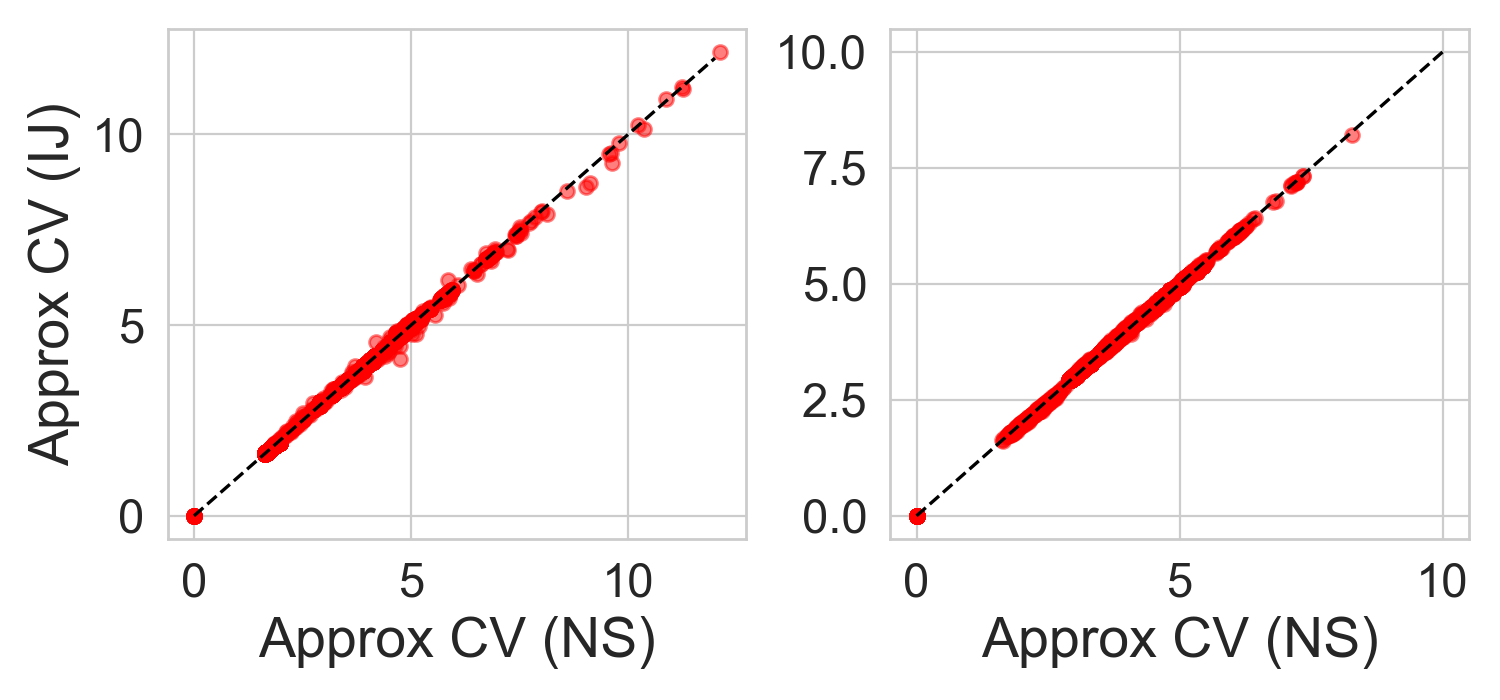}
\caption{Comparison of NS and IJ approximate \lwcv for time-varying Poisson processes.   Scatter plots comparing NS based ACV loss (horizontal axis) at each point in each fold (red dots) to IJ based ACV loss (vertical axis). Black dashed line shows perfect agreement. Left plot containts i.i.d. \lwcv results and the right plot contains contiguous \lwcv results.}
\label{app:fig_NS_v_IJ}
\end{figure}

\section{Derivation of IJ approximations}
\label{app:IJ_derivations}

In all cases considered here (i.e., the ``exchangeable'' leave-one-out CV considered by previous work or the more structured variants for chain-structured or general graph structured models) can be derived similarly. In particular, once we have derived the relevant weighted optimization problem for each case, the derivation of the IJ approximation is the same. Let the relevant weighted optimization problem be defined for $w \in \R^T$:
$$ \paramperturb{w} := \argmin{\param \in \R^D} F(\param, w),$$
where $F$ is some objective function with $F(\cdot, \mathbf{1}_T)$ corresponding to the ``full-data'' fit (i.e., without leaving out any data). We now follow the derivation of the IJ in \citet{giordano2018swiss}. The condition that $\paramone$ is an exact optimum is:
$$ 
\frac{\partial F}{\partial \param}\at{\paramone, \mathbf{1}_T} = 0.
$$
If we take a derivative with respect to $w_t$:
$$ 
\frac{\partial^2 F}{\partial \param \partial \param^T}\at{\paramone, \mathbf{1}_T} \frac{d \param}{d w_t}\at{\paramone, \mathbf{1}_T} + \frac{\partial^2 F}{\partial \param \partial w_t}\at{\paramone, \mathbf{1}_T} \frac{d w_t}{d w_t}\at{\paramone, \mathbf{1}_T} = 0.
$$
Noting that $d w_t / d w_t = 1$ and solving for $d \param / dw_t$:
\begin{equation}
	\frac{d \param}{d w_t}\at{\paramone, \mathbf{1}_N} = 
	   - \left( \frac{\partial^2 F}{\partial \param \partial \param^T}\at{\paramone, \mathbf{1}_T} \right)^{-1} \frac{\partial^2 F}{\partial \param \partial w_t}
\end{equation}
Thus we can form a first order Taylor series of $\paramperturb{w}$ in $w$ around $w = \mathbf{1}_N$ to approximate:
$$
 \paramperturbij{w} \approx \paramone - \sum_{t=1}^T \left( \frac{\partial^2 F}{\partial \param \partial \param^T}\at{\paramone, \mathbf{1}_T} \right)^{-1} \frac{\partial^2 F}{\partial \param \partial w_t} (1 - w_t).
$$
Specializing this last equation to the various $F$ and weight vectors $w$ of interest derives each of our ACV algorithms.

\section{Inexact optimization}
\label{app:inexact_optimization}

We prove here a slightly more general version of \cref{prop:inexact_optimization} that covers both \lwcv and \lscv, as well as arbitrary loss functions $\ell$. To encompass both in the same framework, let $\wseqone_n \in \R^T$ be weight vectors for each structured object $n = 1, \dots, N$. Our weighted objective will be:
$$
	\hat\param(\wseqone) = \argmin{\param \in \R^D} \sum_{n=1}^N \log p(\data_n ; \param, \wseqone_n) + p(\param),
$$
where $\data = \{\data_1, \dots, \data_N\}$ denotes the collection of all observed structures; i.e., each $\data_n$ may be a sequence of observations $x_n$ for a HMM or observed outputs and inputs $x_n, z_n$ for a CRF. 
Let $\hat\param(\mathbf{1}_{NT})$ be the solution to this problem with $w_{nt} = 1$ for all $n$ and $t$. 
We assume that we are interested in estimating the exact out-of-sample loss for some generic loss $\ell$ by using exact CV, 
$
	\losscv := (1/\abs{\mathcal{O}}) \sum_{\leftoutinds} \ell(\data_{\leftoutinds}, \data_{-\leftoutinds}, \hat\param(\wseqo))
$; 
e.g., we may have $\ell(\data_\leftoutinds, \data_{-\leftoutinds}, \hat\param(\wseqo)) = - \log p(x_\leftoutinds \mid x_{[\numsteps] -\leftoutinds} ; \hat\param(\wseqo))$ in the case of a HMM with $N = 1$.
Notice here that $\leftoutinds \subset [N] \times [\numsteps]$ indexes arbitrarily across structures. We can now state a modified version of \cref{assum:inexact_optimization}.
\begin{assumption}
\label{assum:app-inexact_optimization}
  Let $B \subset \R^D$ be a ball centered on $\paramonent$ and containing $\paramstep{S}$. Then the objective $\sum_n \log p(x_n ; \param, \mathbf{1}_\numsteps) + p(\param)$ is strongly convex with parameter $\lambda_{
  \mathrm{min}}$ on $B$. Additionally, on $B$, the derivatives $g_{nt}(\param) := \partial^2 \log p(x_n ; \param, \wseqone_n) / \partial \param \partial w_{nt}$ are Lipschitz continuous with constant $L_g$ for all $n,t$ and the inverse Hessian of the objective is Lipschitz with parameter $L_{Hinv}$. Finally, on $B$, $\ell(\data_\leftoutinds, \data_{-\leftoutinds}, \param)$ is a Lipschitz function of $\param$ with parameter $L_\ell$ for all $\leftoutinds$.
\end{assumption}

We now prove our more general version \cref{prop:inexact_optimization}.
\begin{prop}
  Take \cref{assum:app-inexact_optimization}. Then the approximation error of $\lossij(\paramstep{S})$ is bounded by:
  \begin{equation} 
  		| \lossij(\paramstep{S}) - \losscv)| \leq C \epsparam + \epsij, 
  	\end{equation}
  where $C$ is given by
  $$ \left( L_\ell + \frac{L_\ell L_g}{\lambda_{\mathrm{min}}} + \frac{L_\ell L_{Hinv}}{\abs{\mathcal{O}}} \sum_\mathbf{o} \n{\sum_{t \in \mathbf{o}} g_{nt} (\paramonent)}_2 \right).$$
\end{prop}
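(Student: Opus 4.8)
The plan is to reduce the bound to a parameter-space perturbation estimate and then track how the infinitesimal-jackknife map moves when its base point shifts from the exact optimum $\paramonent$ to the inexact iterate $\paramstep{S}$. The reduction is immediate from the triangle inequality: $\lossij(\paramonent)$ is exactly the loss evaluated at the exact IJ approximations $\paramalg(\paramonent,\xseqone,\leftoutinds)$, so by definition $\epsij = \abs{\losscv - \lossij(\paramonent)}$ and
\[
  \abs{\lossij(\paramstep{S}) - \losscv} \;\le\; \abs{\lossij(\paramstep{S}) - \lossij(\paramonent)} + \epsij .
\]
It then suffices to show $\abs{\lossij(\paramstep{S}) - \lossij(\paramonent)} \le C\,\epsparam$. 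Using the $L_\ell$-Lipschitz continuity of $\ell$ in its parameter argument and the fact that $\lossij$ is a fold-average of such losses,
\[
  \abs{\lossij(\paramstep{S}) - \lossij(\paramonent)} \;\le\; \frac{L_\ell}{\numfolds}\sum_{\leftoutinds \in \mathcal{O}} \n{\,\paramalg(\paramstep{S},\xseqone,\leftoutinds) - \paramalg(\paramonent,\xseqone,\leftoutinds)\,}_2 ,
\]
which reduces everything to bounding, per fold, the displacement of the IJ output under an $\epsparam$-perturbation of its base point.

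For that I would expand the IJ update from \cref{alg:main_algorithm} (equivalently from \cref{app:IJ_derivations}) as $\paramalg(\param,\xseqone,\leftoutinds) = \param + H(\param)^{-1} v_{\leftoutinds}(\param)$, with $H(\param)$ the Hessian of the all-ones-weighted objective at $\param$ and $v_{\leftoutinds}(\param) := \sum_{(n,t)\in\leftoutinds} g_{nt}(\param)$ (the sign convention of \cref{alg:main_algorithm} is irrelevant below), and then split
\[
  \paramalg(\paramstep{S},\xseqone,\leftoutinds) - \paramalg(\paramonent,\xseqone,\leftoutinds)
  = \bigl(\paramstep{S} - \paramonent\bigr)
    + H(\paramstep{S})^{-1}\bigl(v_{\leftoutinds}(\paramstep{S}) - v_{\leftoutinds}(\paramonent)\bigr)
    + \bigl(H(\paramstep{S})^{-1} - H(\paramonent)^{-1}\bigr) v_{\leftoutinds}(\paramonent) .
\]
The first term has norm $\epsparam$, giving the $L_\ell$ summand of $C$. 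For the second, strong convexity on $B$ gives $\n{H(\paramstep{S})^{-1}} \le 1/\lambda_{\mathrm{min}}$, and Lipschitzness of each $g_{nt}$ makes $v_{\leftoutinds}$ Lipschitz with constant $\abs{\leftoutinds}L_g$, so this term is at most $(\abs{\leftoutinds}L_g/\lambda_{\mathrm{min}})\,\epsparam$ --- the $L_\ell L_g/\lambda_{\mathrm{min}}$ summand, with the (constant) fold size absorbed. For the third, the Lipschitz assumption on the inverse Hessian gives $\n{H(\paramstep{S})^{-1} - H(\paramonent)^{-1}} \le L_{Hinv}\,\epsparam$, bounding it by $L_{Hinv}\,\epsparam\,\n{v_{\leftoutinds}(\paramonent)}_2$; averaging over folds yields the $\tfrac{L_\ell L_{Hinv}}{\numfolds}\sum_{\leftoutinds}\n{\sum_{t\in\leftoutinds}g_{nt}(\paramonent)}_2$ summand. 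Summing the three contributions gives $C\,\epsparam$, which with the triangle inequality above yields the stated bound.

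The main obstacle is not any one inequality but a domain-of-validity issue: every use of strong convexity, of invertibility of $H$, and of the Lipschitz constants must occur at points that actually lie in the ball $B$. \cref{assum:app-inexact_optimization} places $\paramstep{S}$ and $\paramonent$ in $B$, but the $L_\ell$-Lipschitz step operates on the segment between the two IJ outputs $\paramalg(\paramstep{S},\xseqone,\leftoutinds)$ and $\paramalg(\paramonent,\xseqone,\leftoutinds)$, so one must check --- or take $B$ large enough --- that these outputs, and the segment joining them, stay in $B$; this is reasonable since each IJ output sits near $\hat\param(\wseqo)$, hence near $\paramonent$, whenever $\epsparam$ and $\epsij$ are small. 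A second, milder point --- best handled in the remark following the proposition rather than in the proof --- is arguing that $C$ does not blow up with $\numsteps$: this needs $\abs{\leftoutinds}$ bounded and $\tfrac{1}{\numfolds}\sum_{\leftoutinds}\n{\sum_{t\in\leftoutinds}g_{nt}(\paramonent)}_2$ bounded, which follows under mild distributional control on $\n{g_{nt}}_2$.
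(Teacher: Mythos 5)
Your proposal is correct and follows essentially the same route as the paper's proof: the same triangle-inequality reduction to $\epsij$ plus $\abs{\lossij(\paramstep{S}) - \lossij(\paramonent)}$, the same $L_\ell$-Lipschitz step, and the same three-term split of the IJ displacement bounded via strong convexity, Lipschitzness of the $g_{nt}$, and Lipschitzness of the inverse Hessian. Your two caveats (the $\abs{\leftoutinds}$ factor multiplying $L_g$, and the need for the IJ outputs to remain in $B$ for the Lipschitz constants to apply) are points the paper's proof silently elides, so flagging them is a strictly more careful reading, not a deviation.
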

\begin{proof}
By the triangle inequality:
\begin{align*} 
  	\abs{\lossij(\paramstep{S}) - \losscv} & \leq \\
    & \abs{\lossij(\paramstep{S}) - \lossij(\paramonent)} \\
    & + \abs{\lossij(\paramonent) - \losscv}.
  \end{align*}
 The second term is just the constant $\epsij$. Now we just need to bound the first term using our Lipschitz assumptions. We have, by the triangle inequality
  \begin{align*}
  	  & \abs{\lossij(\paramonent) - \lossij(\paramstep{S})} \\
  	  & \; \leq \frac{1}{\abs{\mathcal{O}}} \sum_{\mathbf{o}} \left\lvert \ell\left(\data_\leftoutinds, \data_{-\leftoutinds}, \paramonent + H^{-1} (\paramonent) \sum_{t \in \mathbf{o}} g_{nt}(\paramonent) \right) \right. \\
  	  & \left. \quad\quad\quad\quad - \ell \left(\data_\leftoutinds, \data_{-\leftoutinds}, \paramstep{S} + H^{-1} (\paramstep{S}) \sum_{t \in \mathbf{o}} g_{nt}(\paramstep{S}) \right) \right\rvert .
  \end{align*}
  Continuing to apply the triangle inequality and our Lipschitz assumptions:
\begin{align*}
	&\leq \frac{L_\ell}{\abs{\mathcal{O}}} \sum_{\mathbf{o}} \left( \n{ \paramonent - \paramstep{S}}_2 + \n{ H^{-1}(\paramonent) \sum_{t \in \mathbf{o}} g_{nt}(\paramonent) - H^{-1}(\paramstep{S}) \sum_{t \in \mathbf{o}} g_{nt} (\paramstep{S})}_2 \right) \\
	&\leq L_\ell \epsparam + \frac{L_\ell}{\abs{\mathcal{O}}} \sum_\mathbf{o} \n{H^{-1}(\paramstep{S}) \sum_{t \in \mathbf{o}} \left( g_{nt}(\paramonent) - g_{nt} (\paramstep{S}) \right)}_2  \\
	& \quad\quad\quad\quad + \frac{L_\ell}{\abs{\mathcal{O}}} \sum_\mathbf{o} \n{ \left( H^{-1}(\paramonent) - H^{-1} (\paramstep{S}) \right) \sum_{t \in \mathbf{o}} g_{nt}(\paramonent)}_2  \\
	&\leq \left( L_\ell + \frac{L_\ell L_g}{\lambda_{\mathrm{min}}} + \frac{L_\ell L_{Hinv}}{\abs{\mathcal{O}}} \sum_\mathbf{o} \n{\sum_{t \in \mathbf{o}} g_{nt} (\paramonent)}_2 \right) \epsparam.
\end{align*} 
  Defining the term in the parenthesis as $C$ finishes the proof.
  
\end{proof}

As noted after the statement of \cref{prop:inexact_optimization} in the main text, $(1/\abs{\mathcal{O}})  \sum_{\leftoutinds \in \mathcal{O}} \n{\sum_{t \in \mathbf{o}} g_{nt} (\paramonent)}_2$ may depend on $T$, $N$ or $\mathcal{O}$, but we expect it to converge to a constant given reasonable distributional assumptions on the data. To build intuition, we consider the case of leave-one-out CV for generalized linear models, where we observe a dataset of size $N > 1$ and have $T = 1$. In particular,  we have $\log(x_n, y_n; \param) = f(x_n^T\param, y_n)$, where $x_n \in \R^D$ are the covariates and $y_n \in \R$ are the responses. In this case, $g_{nt} = \dnone x_n$, where $\dnone = d f(z) / dz\at{z = x_n^T \paramone}$. 
Then, given reasonable distributional assumptions on the covariates and some sort of control over the derivatives $\dnone$, we might suspect that $(1/N) \sum_n \abs{\dnone} \n{x_n}_2$ will converge to a constant.  As an example, we consider logistic regression with sub-Gaussian data, for which we can actually prove high-probability bounds on this sum.
\begin{defn} \label{defineSubGaussian} [e.g., \citet{vershynin2017hdpBook}]
For $c_x > 0$, a random variable $V$ is $c_x$-\emph{sub-Gaussian} if
$$
  	\mathbb{E}\left[ \exp\left(V^2 / c_x^2 \right) \right] \leq 2.
$$
\end{defn}
\begin{prop}
 For logistic regression, assume that the components of the covariates $x_{nd}$ are i.i.d.\ from a zero-mean $c_x$-sub-Gaussian distribution for $d = 1, \dots, D$. Then we have that, for any $t \geq 0$:
 \begin{equation}
 	\mathrm{Pr}\left[ \abs{ \frac{1}{N} \sum_{n=1}^N  \n{\nabla f(\paramone, x_n)}_2 - \sqrt{D}} \geq t \right] \leq \exp\left[ -C \frac{N t^2}{c_x^2} \right],
 \end{equation}
 where $C > 0$ is some global constant, independent of $N, D,$ and $c_x$.
\end{prop}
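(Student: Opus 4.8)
The plan is to reduce the claim to a classical concentration statement about the Euclidean norm of a random vector with independent sub-Gaussian coordinates, and then to average over $n$. For logistic regression the per-example gradient is $\nabla f(\paramone, x_n) = \dnone\, x_n$ with $\dnone = \sigma(x_n^\top\paramone) - y_n \in (-1,1)$, so $\|\nabla f(\paramone, x_n)\|_2 = |\dnone|\,\|x_n\|_2 \le \|x_n\|_2$; setting $Y_n := \|x_n\|_2$, the $Y_n$ are i.i.d.\ (the rows $x_n$ are), and it suffices to show $\tfrac1N\sum_{n=1}^N Y_n$ concentrates around $\sqrt D$, which is exactly the quantity controlling the constant $C$ in the inexact-optimization proposition (here we also use the implicit normalization $\mathbb E[x_{nd}^2]=1$ that makes $\sqrt D$ the right center).

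The first technical step is single-vector concentration of $Y_n = \|x_n\|_2$ about $\sqrt D$. Writing $Y_n^2 = \sum_{d=1}^D x_{nd}^2$, the summands $x_{nd}^2 - 1$ are independent, mean-zero, and sub-exponential with parameter $O(c_x^2)$ (squares of $c_x$-sub-Gaussians), so Bernstein's inequality (e.g.\ \citet{vershynin2017hdpBook}) gives $\mathrm{Pr}\big[|Y_n^2 - D| \ge u\big] \le 2\exp\big(-c\min(u^2/(Dc_x^4),\, u/c_x^2)\big)$. On the complementary event $Y_n^2$ is within a constant factor of $D$, so the bound $|Y_n - \sqrt D| \le |Y_n^2 - D|/\sqrt D$ converts this into a sub-Gaussian-type tail $\mathrm{Pr}[|Y_n - \sqrt D| \ge v] \le 2\exp(-c v^2/c_x^2)$; equivalently $Y_n - \mathbb E Y_n$ is $O(c_x)$-sub-Gaussian, and Jensen together with the variance bound $\mathrm{Var}(Y_n) = O(c_x^2)$ shows $|\mathbb E Y_n - \sqrt D| = O(c_x^2/\sqrt D)$, a lower-order bias.

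The second step is to average: since $Y_1,\dots,Y_N$ are i.i.d.\ and each $Y_n - \mu$ (with $\mu := \mathbb E Y_n$) is $O(c_x)$-sub-Gaussian, Hoeffding's inequality for sums of sub-Gaussians yields $\mathrm{Pr}\big[|\tfrac1N\sum_n Y_n - \mu| \ge t/2\big] \le 2\exp(-cNt^2/c_x^2)$. Combining this with $|\mu - \sqrt D| \le t/2$ (valid once $t$ exceeds the $O(c_x^2/\sqrt D)$ bias) by the triangle inequality, then absorbing the leading factor of $2$ and all universal constants into a single global $C$, gives $\mathrm{Pr}[|\tfrac1N\sum_n Y_n - \sqrt D| \ge t] \le \exp(-CNt^2/c_x^2)$, i.e.\ the claimed inequality (the regime of $t$ below the bias being trivial since the right-hand side is then at least a constant).

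I expect the main obstacle to be the first step: extracting a clean, dimension-free sub-Gaussian tail for $\|x_n\|_2 - \sqrt D$ from the one-parameter definition above — in particular tracking that the $\min(\cdot,\cdot)$ from Bernstein collapses to the quadratic regime over the range of deviations relevant to the final statement, and that the $O(c_x^2/\sqrt D)$ discrepancy between $\mathbb E\|x_n\|_2$ and $\sqrt D$ is genuinely dominated by $t$; the rest is a routine invocation of Bernstein's and Hoeffding's inequalities. One caveat worth flagging in the write-up: since $|\dnone| \le 1$ only gives $\|\nabla f(\paramone,x_n)\|_2 \le \|x_n\|_2$, the cleanest rigorous reading of the result is as an upper-tail bound on $\tfrac1N\sum_n \|\nabla f(\paramone,x_n)\|_2$ — which is all that is needed to bound $C$ — with the two-sided concentration holding for the surrogate $\tfrac1N\sum_n \|x_n\|_2$.
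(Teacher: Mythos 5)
Your proof is correct and follows essentially the same route as the paper's: bound $\n{\nabla f(\paramone, x_n)}_2 \leq \n{x_n}_2$ via $\abs{\dnone} \leq 1$, concentrate $\n{x_n}_2$ about $\sqrt{D}$, and apply sub-Gaussian Hoeffding over $n$; the only difference is that you re-derive the norm-concentration step from scratch (Bernstein on $\sum_d x_{nd}^2$ plus the conversion $\abs{\,\n{x_n}_2 - \sqrt{D}\,} \leq \abs{\,\n{x_n}_2^2 - D\,}/\sqrt{D}$, together with an explicit accounting of the $O(c_x^2/\sqrt{D})$ bias) where the paper simply cites Theorem 3.1.1 of Vershynin and treats $\n{x_n}_2 - \sqrt{D}$ as exactly zero-mean. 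Your closing caveat is well taken and applies equally to the paper's own argument: since only the one-sided inequality $\n{\nabla f(\paramone, x_n)}_2 \leq \n{x_n}_2$ is available, only the upper tail of the stated two-sided bound is genuinely established, which is all that is needed to control the constant $C$ in the inexact-optimization proposition.
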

\begin{proof}
	First, we can use the fact that $\n{\nabla f(\paramone, x_n)}_2 \leq \n{x_n}_2$, as for logistic regression, $\abs{\dnone} \leq 1$. Next, we can use the fact that $\n{x_n}_2 - \sqrt{D}$ is a zero-mean sub-Gaussian random variable by Theorem 3.1.1 of \citet{vershynin2017hdpBook}. We can then apply Hoeffding's inequality \citep[Theorem 2.6.3]{vershynin2017hdpBook} to complete the proof.
\end{proof}


\section{Experimental details}
\label{sec:supp_details}
We provide further experimental details in this section.

\subsection{Time varying Poisson processes}
\label{subsec:dodger}

We briefly summarize the time-varying Poisson process model from \cite{ihler2006adaptive} here.
Our data is a time series of loop sensor data collected every five minutes over a span of 25 weeks from a section of a freeway near a baseball stadium in Los Angeles. In all, there are 50,400 measurements of the number of cars on that span of the freeway.
\citeauthor{ihler2006adaptive} analyze the resulting time series of counts $\xseqone$ to detect the presence or absence of an event at the stadium.
Following their model, we use a background Poisson process with a time varying rate parameter $\lambda_t$ to model \emph{non-event} counts, $x_{b_t} \sim \text{Poisson}(\lambda_t)$. To model the daily variation apparent in the data, we define $\lambda_t \triangleq \lambda_o\delta_{d_t}$, where $d_t$ takes one of seven values, each corresponding to one day of the week and $[\delta_1 / 7, \ldots, \delta_7 / 7] \sim \text{Dir}(1, \ldots, 1)$. We use binary latent variables $z_t$ indicate the presence or absence of an event and assume a first order Markovian dependence, $z_t \mid z_{t-1} \sim A_{z_t-1}$. Next,  $z_t=0$ indicates a non-event at time step $t$ and the observed counts are generated as $x_t = x_{b_t}$. An event at time step $t$ corresponds to $z_t=1$ and $x_t =  x_{b_t} + x_{e_t}$, and  
$
x_{e_t} \sim \text{NegBinomial}(x_{e_t} \mid a, b/(1 + b)),
$ where $x_{e_t}$ are unobserved excess counts resulting from the event. We place Gamma priors on $\lambda_0, a, b$ and Beta priors on $A_{00}$ and $A_{11}$, and learn the MAP estimates of the parameters $\Theta = \{\lambda_0, \delta_1, \ldots, \delta_7, a, b, A\}$ while marginalizing $x_{e_t}$ and $z_1, \ldots, z_T$.  We refer the interested reader to \cite{ihler2006adaptive} for further details about the model and data.

\paragraph{Contiguous \lwcv.} In contiguous \lwcv we leave out contiguous blocks from a time series. To drop $m\%$ of the data, we sample an index $t$ uniformly at random from $[\lfloor mT/100 \rfloor + 1, \ldots, T]$ and set $\leftoutinds = \{t - \lfloor mT/100 \rfloor, \ldots t\}$. 

\paragraph{Numerical values from \cref{fig:dodgers_withinseq}}
In \cref{tab:dodgers} we present an evaluation of the LWCV approximation quality for time-varying Poisson processes. The results presented are a numerical summary of the results visually illustrated in \cref{fig:dodgers_withinseq}. 
\begin{table}[ht]
\center
	\begin{tabular}
	{|l||c|c|c|}
	\hline
	& 2 \% & 5\% & 10 \% \\
	\hline
	i.i.d & $0.005 \pm 0.009$ & $0.006 \pm 0.01$ & $0.006 \pm 0.005$ \\
	contiguous & $0.003 \pm 0.003$ & $0.007 \pm 0.02$ & $0.007 \pm 0.006$ \\
	\hline
	\end{tabular}
	\vskip 1mm
	\caption{Evaluation of approximate LWCV for time-varying Poisson processes. Mean ACV relative error, $|acv - cv| / cv$ and two standard deviations, over ten folds with $T=10000$. The numbers summarize the scatter plots in the lower left six panels of \cref{fig:dodgers_withinseq}. The column headers indicate the percentage of data in the held out fold. }
\label{tab:dodgers}
\end{table}
\cref{tab:dodgers_timing} presents the wall clock time numbers plotted in the lower right panels of \cref{fig:dodgers_withinseq}.
\begin{table}[ht]
\center
	\begin{tabular}
	{|c||c|c|c|c|c|c|}
	\cline{1-7}
	\multicolumn{4}{|c}{i.i.d} &
	\multicolumn{3}{|c|}{contiguous}\\
	\cline{1-7}
	\hline
	T & ACV & ACV (NS) & Exact CV & ACV & ACV (NS) & Exact CV \\
	\hline
5000 & 1.1 mins & 10.5 hours & 61.1 hours & 1.3 mins & 10.5 hours & 61.3 hours\\
10000 & 2.2 mins & 19.9 hours & 185.8 hours & 2.4 mins & 19.9 hours & 182.4 hours
\\
50000 & 11.0 mins & 98.6 hours & 682.2 hours & 10.6 mins & 99.1 hours & 683.9 hours
\\
\hline
	\end{tabular}
	\vskip 1mm
	\caption{Wall clock time from the two lower right panels in \cref{fig:dodgers_withinseq} at $T=50000$ and with $m \% = 10 \%$ of the data in the held out fold.}
\label{tab:dodgers_timing}
\end{table}

\subsection{Neural CRF}
\label{subsec:neural_crf}
We employed a bi-directional LSTM model with a CRF output layer. We used a concatenation of a 300 dimensional Glove word embeddings~\citep{pennington2014glove} and a character CNN~\citep{ma2016end} based character representation. We employed variational dropout with a dropout rate of $0.25$. The architecture is detailed below.
\begin{verbatim}
LSTMCRFVD(
  (dropout): Dropout(p=0.25, inplace=False)
  (char_feats_layer): CharCNN(
    (char_embedding): CharEmbedding(
      (embedding): Embedding(96, 50, padding_idx=0)
      (embedding_dropout): Dropout(p=0.25, inplace=False)
    )
    (cnn): Conv1d(50, 30, kernel_size=(3,), stride=(1,), padding=(2,))
  )
  (word_embedding): Embedding(2196016, 300)
  (rnn): StackedBidirectionalLstm(
    (forward_layer_0): AugmentedLstm(
      (input_linearity): Linear(in_features=330, out_features=200, bias=False)
      (state_linearity): Linear(in_features=50, out_features=200, bias=True)
    )
    (backward_layer_0): AugmentedLstm(
      (input_linearity): Linear(in_features=330, out_features=200, bias=False)
      (state_linearity): Linear(in_features=50, out_features=200, bias=True)
    )
    (forward_layer_1): AugmentedLstm(
      (input_linearity): Linear(in_features=100, out_features=200, bias=False)
      (state_linearity): Linear(in_features=50, out_features=200, bias=True)
    )
    (backward_layer_1): AugmentedLstm(
      (input_linearity): Linear(in_features=100, out_features=200, bias=False)
      (state_linearity): Linear(in_features=50, out_features=200, bias=True)
    )
    (layer_dropout): InputVariationalDropout(p=0.25, inplace=False)
  )
  (rnn_to_crf): Linear(in_features=100, out_features=9, bias=True)
  (crf): ConditionalRandomField()
)
\end{verbatim}

\paragraph{Training}  We used Adam for optimization. Following the recommendation of ~\cite{reimers2017optimal} we used mini-batches of size $31$~\cite{reimers2017optimal}. We employed early stopping by monitoring the loss on the validation set. Freezing all but the CRF layers we further fine-tuned only the CRF layer for an additional 60 epochs. In~\cref{supp:crf_v_time} we plot the mean absolute approximation error in the held out probability under exact CV and our approximation across all 500 folds as a function of (wall clock) time taken by the optimization procedure. 
\begin{figure}
\centering
	\includegraphics[width=0.48\textwidth]{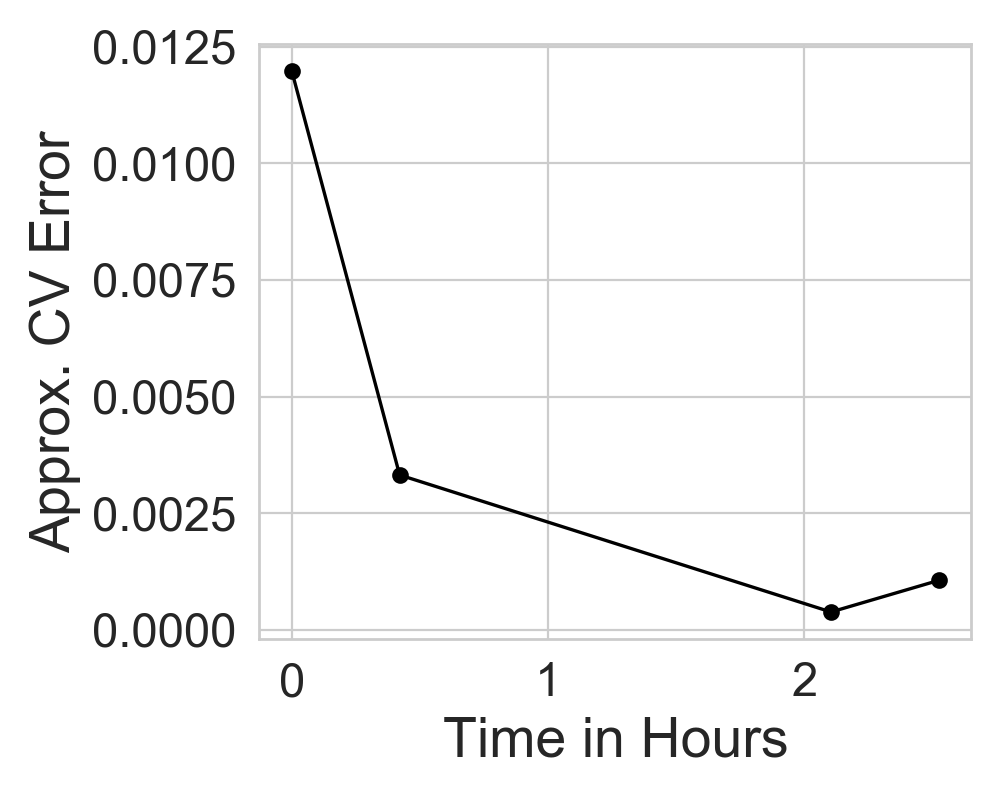}
	\includegraphics[width=0.48\textwidth]{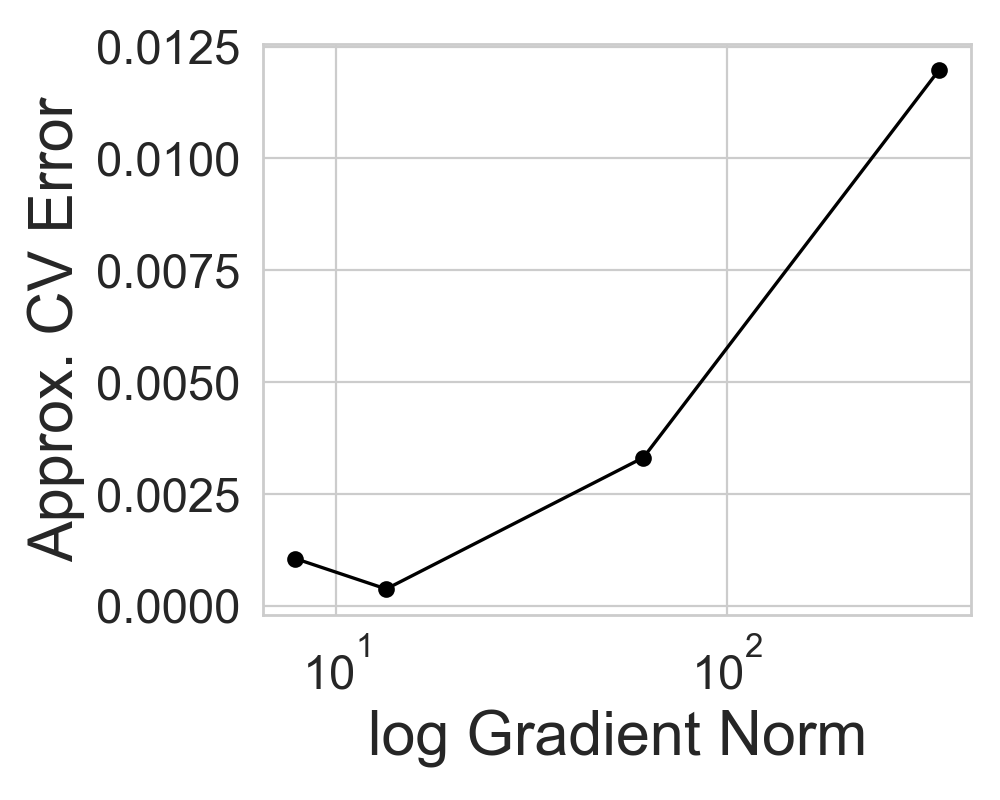}
	\caption{\emph{(Left panel)} Error in our approximation relative to exact CV averaged across folds, as a function of wall clock time. \emph{(Right panel)} Error in our approximation relative to exact CV and averaged across folds, as a function of log gradient norm in the optimization procedure.}
	\label{supp:crf_v_time}
\end{figure}

\subsection{Philadelphia crime experiment}
\label{subsec:philly}

Our crime data comes from \url{opendataphilly.org}, where the Philadelphia Police Department publicly releases the time, type, and location of every reported time.
For each census tract, we have a latent label $z_{t} \in \{-1,1\},$ and model the number of reported crimes $x_{t}$ with a simple Poisson mixture model: $x_{t} \vert z_{t} \sim \text{Poisson}(\lambda_{z_{t}})$ where $\lambda_{-1}, \lambda_{1} > 0$ are the unknown mean levels of crime in low- and high-crime areas, respectively. 
Since we might expect adjacent census tracts to be in the same latent state, we model the $z_{t}$'s with an MRF so that 
$$
\log{p(\xseqone, \zseqone; \param)} = \sum_{t}{[-\lambda_{z_{t}} + x_{t}\log{\lambda_{z_{t}} - \log(x_t!) ]}} + \beta \sum_{t}{\sum_{t' \in \Gamma(t)}{\mathbf{1}\{z_{t} = z_{t'}\}}} - \log Z(\beta)
$$
where $\Theta = \{\lambda_{-1}, \lambda_{1}\}$, $\Gamma(t)$ is the collection of census tracts that are spatially adjacent to census tract $t$ and $\log Z(\beta)$ is the log normalizer for the latent field $p(\zseqone)$.The potential $\mathbf{1}\{z_{t} = z_{t'}\}$ expresses prior belief that adjacent census tracts should be in the same latent class. 
The connection strength $\beta$ is treated as a hyper-parameter. 
For each $\beta$ fixed, $\param$ is estimated using expectation maximization \cite{dempster1977maximum} on $\sum_{\zseqone} \log{p(\xseqone, \zseqone; \param)}$.  M-step computation is analytical, given the posteriors $p(\zseqone_t|\xseqone;\param)$. 
Exact E-step computation is reasonably efficient through smart variable elimination \cite[Chapter 9]{koller2009probabilistic}: the number of states is small and common heuristics to find good elimination orderings, such as MinFill, worked well. 
This efficient variable elimination order is also used to implement the \textsc{WeightedMarg} routine of ~\ref{alg:main_algorithm}.


\section{Additional experiments}
\label{app:extra_experiments}
We present additional experimental validation in support of the ACV methods in this section.
\subsection{Motion capture analysis}
\label{subsec:mocap}
\paragraph{Data.} We analyze motion capture recordings from the CMU MoCap database (http://mocap.cs.cmu.edu), which consists of several recordings of subjects performing a shared set of activities. We focus on the $124$ sequences from the ``Physical activities and Sports'' category that has been previously been studied~\citep{fox2009, hughes2012, fox2014} in the context of unsupervised discovery of shared  activities from the observed sequences. At each time step we retain twelve measurements deemed informative for describing the activities of interest, as recommended by~\citeauthor{fox2014}. Auto-regressive hidden Markov models have been shown effective for this task, motivating their use in this section.  

\paragraph{Accurate \lscv --- auto-regressive HMMs}
\begin{figure}[ht]
\centering
\includegraphics[width=0.64\textwidth]{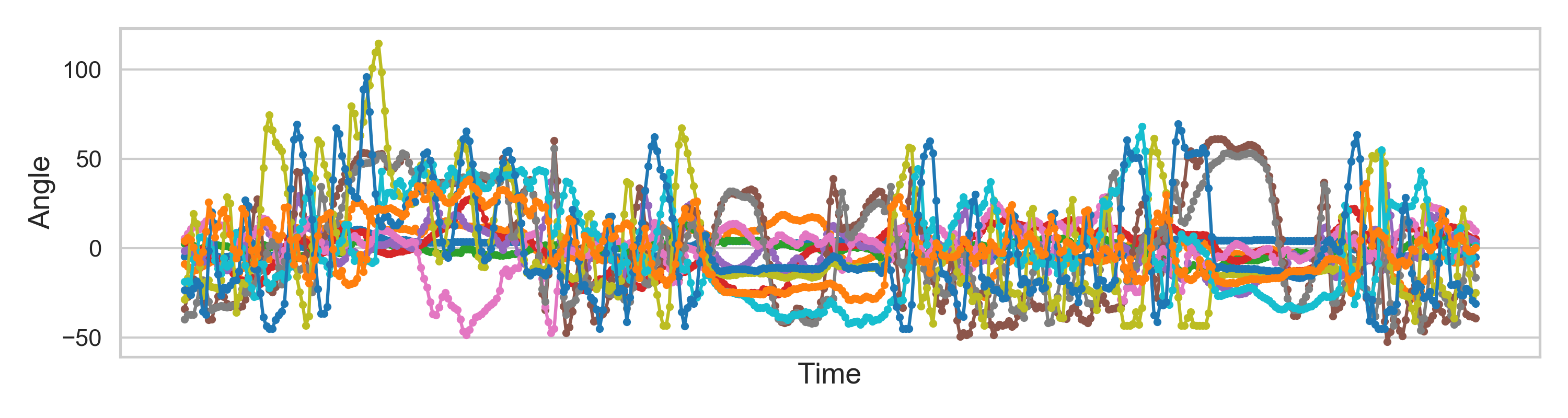}
\includegraphics[width=0.3\textwidth]{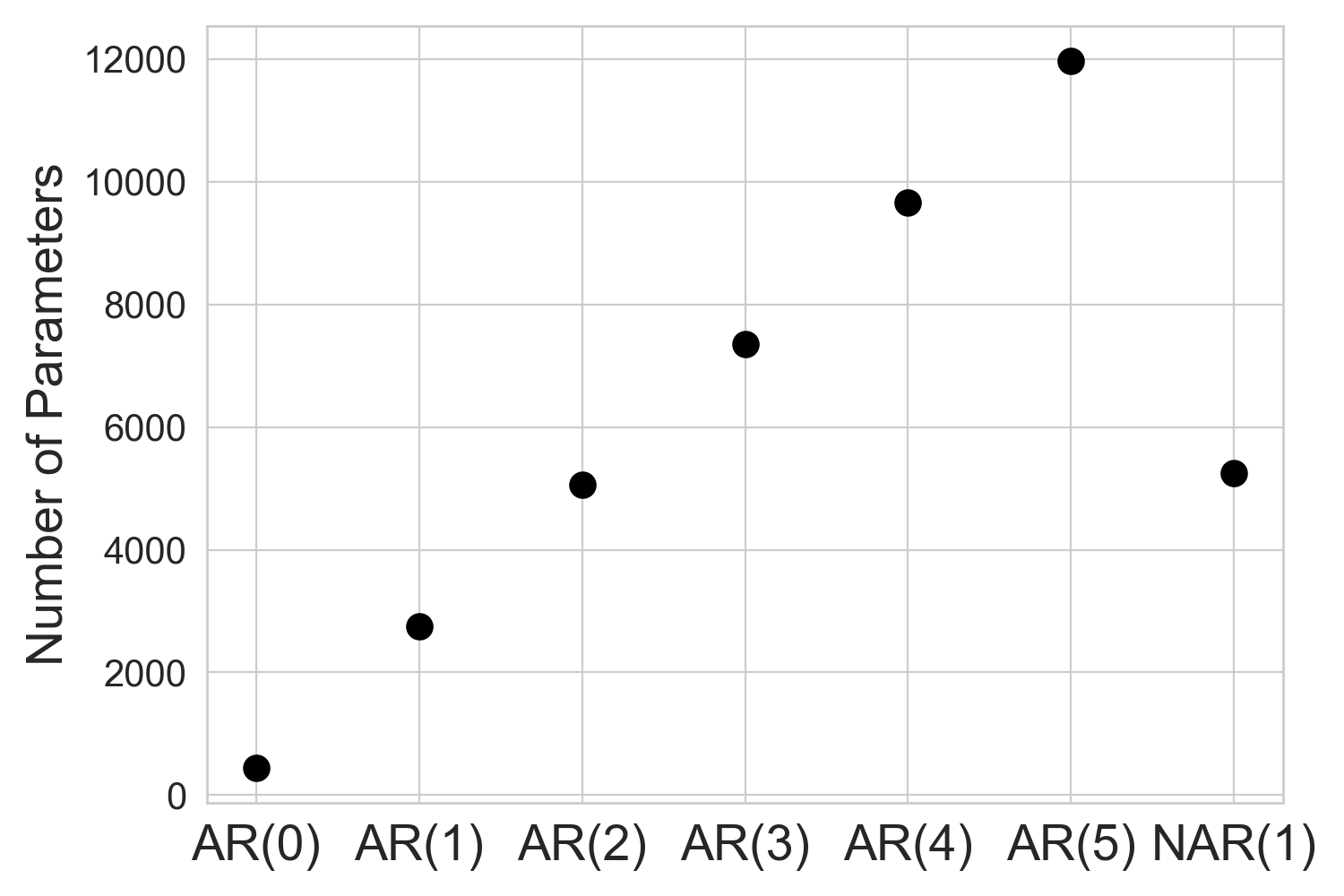} \\
\includegraphics[width=\textwidth]{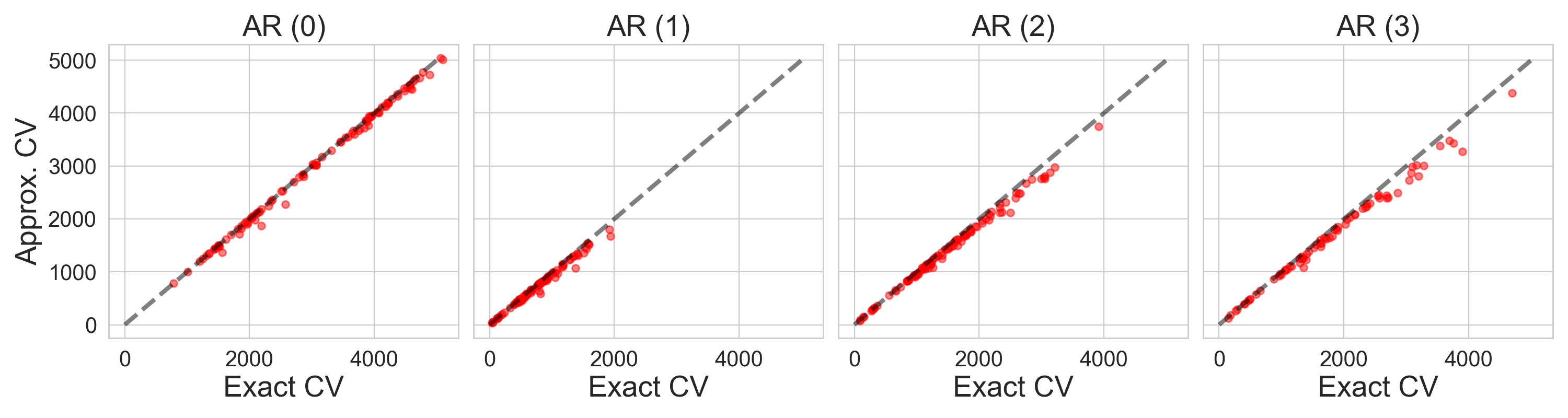}\\
\includegraphics[width=0.49\textwidth]{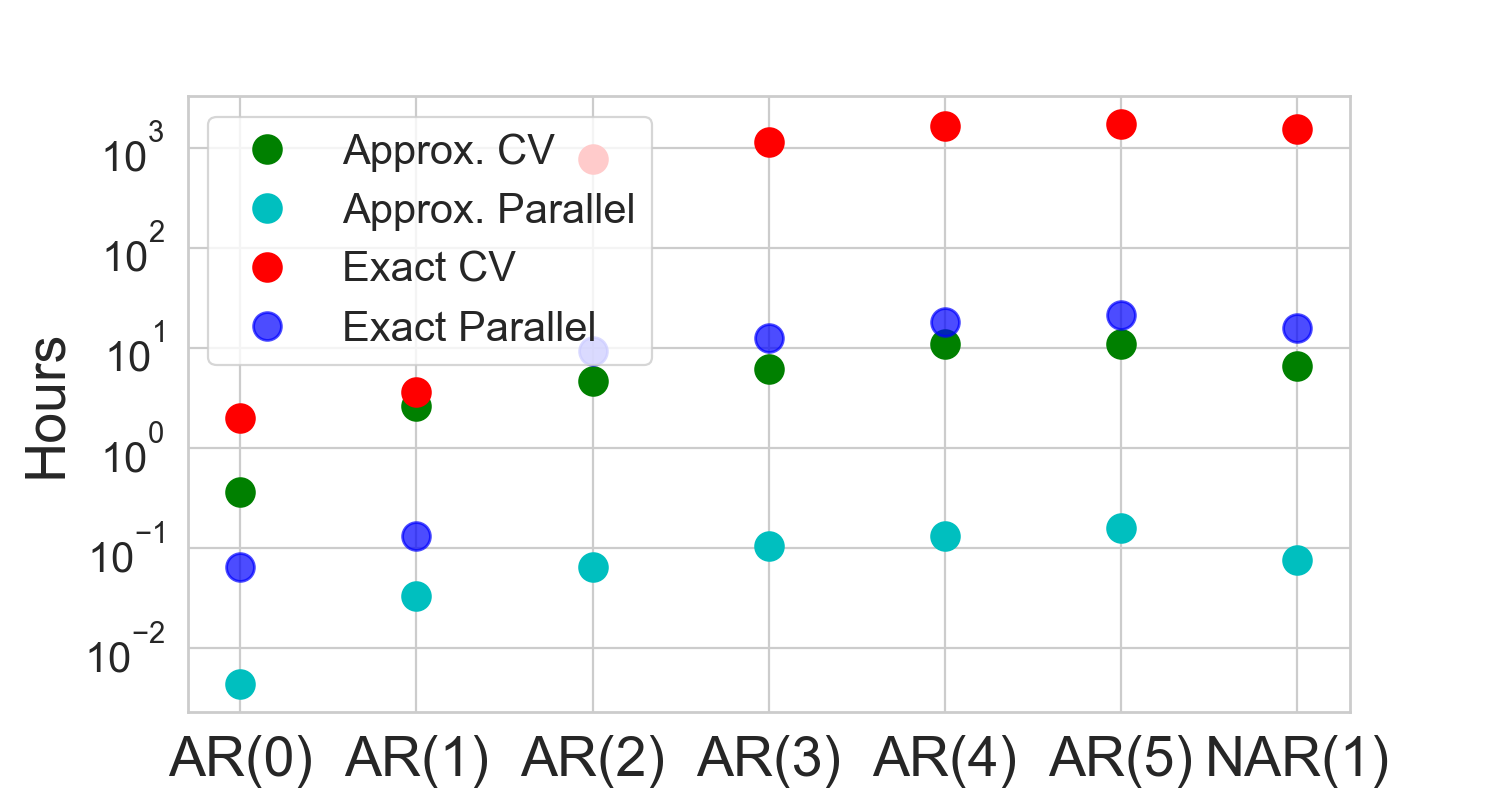}
\includegraphics[width=0.48\textwidth]{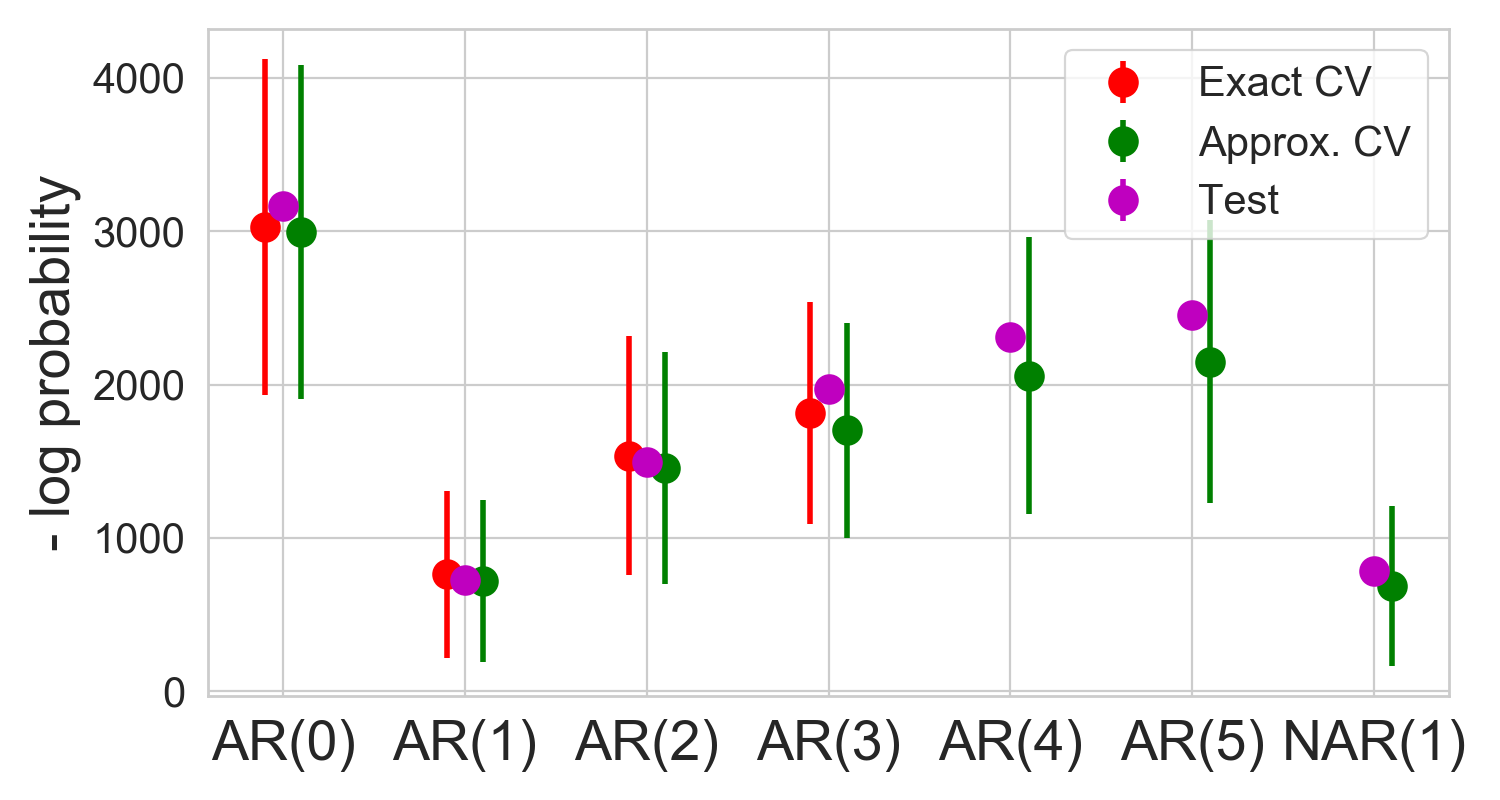}
\caption{Motion capture analysis through auto-regressive HMMs. \emph{(Top)} A twelve dimensional MoCap sequence that serves as the observed data and the number of parameters $D$ for different models under consideration. The high dimensionality of the models make alternate ACV methods based on a single Newton step infeasible.\emph{(Middle)} Scatter plots comparing leave one out loss, where x-axis is $-\ln p(\xseq \mid \param{(\wseq{\{n\}}}))$ and y-axis is $-\ln p(\xseq \mid \paramij{(\wseq{\{n\}}}))$ for different auto-regressive orders under exact and IJ approximated leave one out cross validation. Points along the diagonal indicate accurate IJ approximations. \emph{(Bottom)}  Timing and held out negative log probability across different models. For IJ and Exact the error bars represent two jackknife standard error. The IJ approximations are significantly faster but closely approximate exact leave one out loss across models and track well with test loss computed on the held out $20 \%$ of the dataset.}
\label{fig:supp_mocap}
\end{figure}
We confirm here that ACV is accurate and computationally efficient for structured models in the case studied by previous work: \lscv \emph{with} exact model fits. We present comparisons between embarrassingly parallel exact CV and \lscv with parallelized Hessian computation (``Approx. Parallel'', i.e., we parallelize the Hessian computation over different structures $n$), alleviating the primary computational bottleneck for ACV. We model the collection of MoCAP sequences via a \emph{K}-state HMM with an order-p auto-regressive (AR(p)) observation model. We also consider variants where each state's auto-regressive model is parameterized via a neural network. Figure~\ref{fig:supp_mocap} visualizes a MoCAP sequence where we have retained only the $12$ relevant dimensions.  For this experiment, we retain up to $100 (=T)$ measurements per sequence. We employ the following auto-regressive observation model,
\begin{equation}
\begin{split}
    p(\xseqt{t} \mid \xseqt{t-1}, &\ldots, \xseqt{t-p}, \zseqt{t}) = \normal(\xseqt{t} \mid \sum_{m=1}^p B_{\zseqt{t}}\xseqt{t-m} + b_{\zseqt{t}}, \sigma^2\eye), \\
    B_k &\sim \text{Matrix-Norm}(\eye, \eye, \eye), \;\; b_k\sim \normal{(0, \eye)} \quad \forall k \in \{1, \ldots, K\},
\end{split}
\end{equation}
where $p$ is the order of the auto-regression. Neural auto-regressive observation models are defined as, 
\begin{equation}
\begin{split}
   p(\xseqt{t} \mid \xseqt{t-1}, \ldots, \xseqt{t-p}, \zseqt{t}) &= \normal(\xseqt{t} \mid B^1_{\zseqt{t}}h(\sum_{m=1}^p B^0_{\zseqt{t}}\xseqt{t-m} + b^0_{\zseqt{t}}) + b^1_{\zseqt{t}}, \sigma^2\eye), \\
    \theta_k &\sim \normal(0, \lambda\eye), \quad \forall k \in \{1, \ldots, K\},
    \end{split}
\end{equation}
where $\theta_k = \{B^0_k, b^0_k, B^1_k, b^1_k\}$, and $h$ denotes a tanh non-linearity, and $B^0_k, B^0_k \in \mathbb{R}^{12 \times 12}$ and $b^0_k, b^1_k \in \mathbb{R}$, i.e., a 12-12-12 fully connected network.

While past work has explored AR(0) and AR(1) observation models, a thorough exploration of the effect of $p$ has been lacking. 
ACV provides an effective tool for exploring such questions accurately and inexpensively. We split the sequences into a $80/20 \%$ train and test split and perform \lscv on the training data $(N = 100)$ to compare AR(p) models with $p$ ranging from zero through five and the neural variant with $p=1$ (NAR(1)), in terms of how well they describe the left out sequence. 
Following~\citeauthor{fox2014}, we fix $K = 16.$
Figure~\ref{fig:supp_mocap} summarizes our results. First, 
we see that the ACV loss is quite close to the exact CV loss and that both track well with the held-out test loss. Furthermore, consistent with previous studies, we find that using an AR(1) observation model is significantly better than using an AR(0) or higher-order AR model. Interestingly, the out-of-sample loss for the AR(1) model is comparable to neural variant, NAR(1). 

In terms of computation, the ACV is significantly faster than exact CV. In fact, for the higher order auto-regressive likelihoods and the neural variant, exact CV was too expensive to perform. Instead, we report estimated time for running such experiments by multiplying the average time taken to run three folds of \lscv with the number of training instances. For AR(0) and AR(1) we compare against exact CV implemented via publicly available optimized Expectation Maximization code~\citep{bnpy}. The higher order AR and the NAR(1) model, were fit by BFGS as implemented in 
	\textsc{scipy.optimize.minimize}. We find that computing the embarrassingly parallel version provides significant speedups over their serial counterparts.

\paragraph{Accurate \lwcv for MoCAP}
\begin{figure}[ht]
\includegraphics[width=1\textwidth]{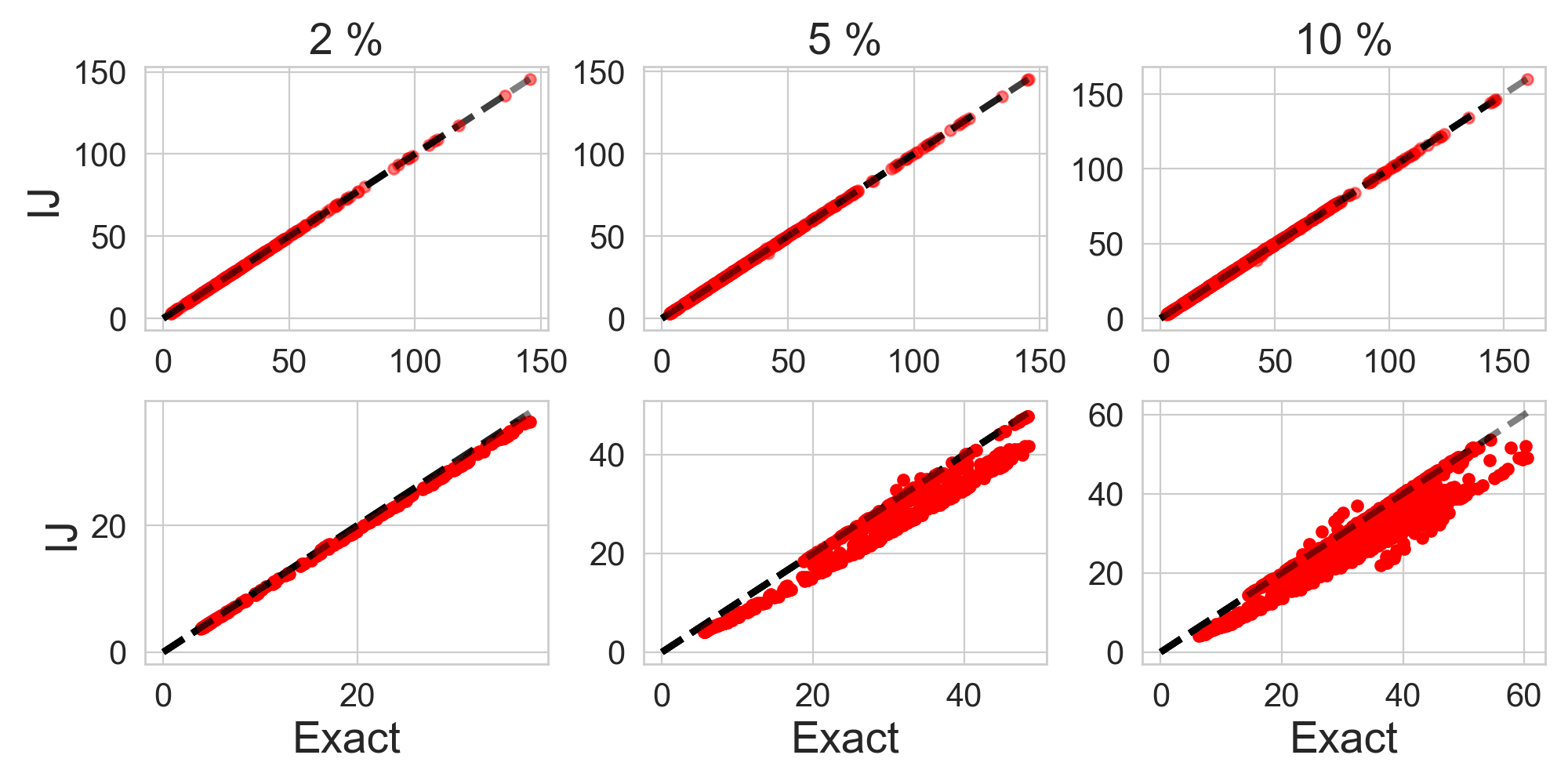}
\caption{Within sequence leave out experiments. We took the longest MoCAP sequence containing $1484$ measurements and fit a five state HMM with Gaussian emissions. We find that even for the MoCAP data IJ approximations to i.i.d. \lwcv is very accurate. As the contiguous \lwcv involves making larger scale changes to the sequence, for instance at $10\%$ we end up dropping chunks of $~ 140$ time steps from the sequences, resulting in larger changes to the parameters, IJ approximations are relatively less accurate. \emph{(Top)} Scatter plots comparing i.i.d \lwcv loss $-\ln p(x_t \mid \xobs; \param{(\wseq{\mathbf{o}})})$ (horizontal axis) with $-\ln p(x_t \mid \xobs; \paramij({\wseq{\mathbf{o}}}))$ (vertical axis), for each point $x_t$ left out in each fold, computed under exact CV for different omission rates $m\% = 2\%, 5\%,$ and $10\%$ on $M=10$ trials. \emph{(Bottom)} Results for contiguous \lwcv. }
\label{fig:appendix_withinseq}
\end{figure}
\begin{figure}[ht]
\includegraphics[width=\textwidth]{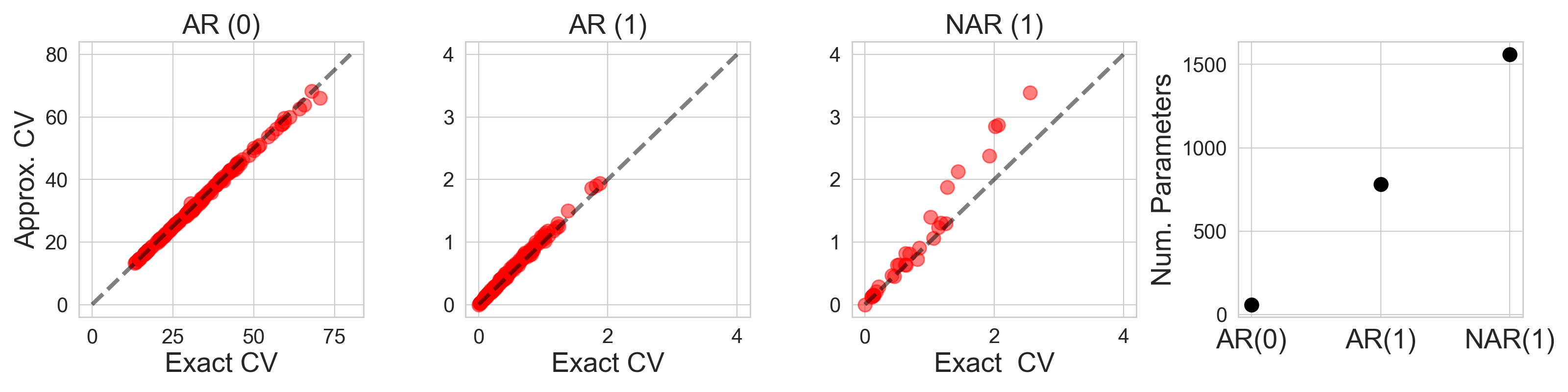}
\caption{Leave Future Out CV for MoCAP data on a single MoCAP sequence containing 1484 measurements. The scatter plots compare $-\ln p(x_{T'} \mid \xobs; \param{(\wseq{\mathbf{o}})})$ (horizontal axis)) with $-\ln p(x_{T'} \mid \xobs; \paramij({\wseq{\mathbf{o}}}))$ (vertical axis), with $\mathbf{o} = \{T', T'+1, \ldots T\}$, for some $T' \leq T$, for a five state HMM with Gaussian emissions (left), order $1$ auto-regressive emissions (middle),  neural auto-regressive emissions (right). The rightmost plot shows the number of parameters in each model. We vary $T'$ from $1337$ to $1484$ for  Gaussian and AR(1) emissions. Since exact fits the NAR model are more expensive we only vary $T'$ between $1455$ and $1484$ for NAR(1). We find that ACV to be accurate. The NAR model which is an instance of a higher dimensional optimization problem, leads to approximations that are less accurate than the lower dimensional AR(0) and AR(1) cases.}
\label{fig:appendix_LFO}
\end{figure}
Next, we present \lwcv results on a $1{,}484$ measurement long sequence extracted from the MoCAP dataset. We explore three variants of \lwcv: i.i.d \lwcv, contiguous \lwcv, and a special case of contiguous \lwcv: leave-future-out CV. Figures~\ref{fig:appendix_withinseq} and \ref{fig:appendix_LFO} present these results. We find that the IJ approximations again provide accurate approximations to exact CV. The performance deteriorates for contiguous \lwcv when large chunks of the sequence are left out. Since large changes to the sequence result in large changes to the fit parameters, a Taylor series approximation about the original fit is less accurate. Also, for high dimensional models such as NAR(1) IJ approximations tend to be less accurate~\citep{stephenson2019sparse}, explaining the drop in LFOCV performance for the NAR(1) model.


\end{document}